\def\eqref#1{equation~\ref{#1}}
\def\1{\bm{1}}
\def\vu{{\bm{u}}}
\def\vv{{\bm{v}}}
\def\vx{{\bm{x}}}
\def\vy{{\bm{y}}}
\DeclareMathAlphabet{\mathsfit}{\encodingdefault}{\sfdefault}{m}{sl}
\SetMathAlphabet{\mathsfit}{bold}{\encodingdefault}{\sfdefault}{bx}{n}
\theoremstyle{plain}
\newtheorem{problem}{Problem}
\newtheorem{theorem}{Theorem}
\newtheorem{lemma}{Lemma}
\newtheorem{definition}{Definition}
\newtheorem{proposition}{Proposition}
\newcommand\preitem{\mdseries\textbullet\space}
\newlist{desclist}{description}{3}
\setlist[desclist,1]{format=\preitem\bfseries,leftmargin=\widthof{\preitem},style=sameline}
\newcommand\latinabbrev[1]{
  \peek_meaning:NTF . {
    #1\@}%
  { \peek_catcode:NTF a {
      #1.\@ }%
    {#1.\@}}}
\def\eg{\latinabbrev{e.g}}
\newcommand{\model}{\textsc{LensLLM}}
\begin{document}

\icmltitlerunning{\textsc{LensLLM}: Unveiling Fine-Tuning Dynamics for LLM Selection}

\twocolumn[
\icmltitle{\textsc{LensLLM}: Unveiling Fine-Tuning Dynamics for LLM Selection}



\icmlsetsymbol{equal}{*}

\begin{icmlauthorlist}
\icmlauthor{Xinyue Zeng}{VT}
\icmlauthor{Haohui Wang}{VT}
\icmlauthor{Junhong Lin}{MIT}
\icmlauthor{Jun Wu}{MSU}
\icmlauthor{Tyler Cody}{VTISD}
\icmlauthor{Dawei Zhou}{VT}
\end{icmlauthorlist}

\icmlaffiliation{VT}{Department of Computer Science, Virginia Tech, Blacksburg, VA, USA.}
\icmlaffiliation{MIT}{Department of Electrical Engineering and Computer Science, Massachusetts Institute of Technology, Cambridge, MA, USA.}
\icmlaffiliation{MSU}{Department of Computer Science and Engineering, Michigan State University, East Lansing, MI, USA.}
\icmlaffiliation{VTISD}{Department of Intelligent Systems Division, Virginia Tech, Blacksburg, VA, USA.}

\icmlcorrespondingauthor{Xinyue Zeng}{xyzeng@vt.edu}

\icmlkeywords{LLM Selection, Fine-tuning, Generalization}

\vskip 0.3in
]



\printAffiliationsAndNotice{}  

\begin{abstract}

The proliferation of open-sourced Large Language Models (LLMs) and diverse downstream tasks necessitates efficient model selection, given the impracticality of fine-tuning all candidates due to computational constraints.
Despite the recent advances in LLM selection, a fundamental research question largely remains nascent: \emph{how can we model the dynamic behaviors of LLMs during fine-tuning, thereby enhancing our understanding of their generalization performance across diverse downstream tasks?}
In this work, we propose a novel theoretical framework that provides a proper lens to assess the generalization capabilities of LLMs, thereby enabling accurate and efficient LLM selection for downstream applications. 
In particular, we first derive a \emph{PAC-Bayesian Generalization Bound} that unveils fine-tuning dynamics of LLMs and then introduce \model{}, a Neural Tangent Kernel (NTK)-based Rectified Scaling Model that enables accurate performance predictions across diverse tasks while maintaining computational efficiency. 
Extensive empirical results on 3 large-scale benchmarks demonstrate that our model achieves up to 91.1\% accuracy and reduces up to 88.5\% computational cost in LLM selection, outperforming 5 state-of-the-art methods. We open-source our proposed \model{} model and corresponding results at \href{https://github.com/Susan571/LENSLLM.git}{LensLLM.io}.
\end{abstract}

\section{Introduction}\label{sec:Introduction}
The rise of large language models (LLMs) has revolutionized natural language processing, driving remarkable progress in applications such as machine translation \citep{liu2020multilingual, costa2022no}, text summarization \citep{zhang2020pegasus, lewis2020bart}, question answering \citep{khashabi2020unifiedqa, wei2022chain}, and dialogue systems \citep{bommasani2021opportunities, thoppilan2022lamda}. However, the recent explosion of open-source LLMs (e.g., LLaMA \cite{touvron2023llama}, Falcon \cite{almazrouei2023falcon}, Mistral \cite{mistralai2024}, DeepSeek \cite{deepseek-llm}) has heightened interest in model selection. This selection aims to identify the optimal model from a set of candidates by balancing model complexity with the ability to explain the observed data. The tremendous variety of downstream tasks \cite{wei2022emergentabilitieslargelanguage, wu2023autogenenablingnextgenllm, wang2024agentailanggraphmodular,jiao2023swarmgptcombininglargelanguage} and the exponential growth in model sizes \citep{zhao2024surveylargelanguagemodels, minaee2024largelanguagemodelssurvey}, as well as the associated computational costs, have led to the need to select the optimal LLM efficiently. Moreover, the models selected through fine-tuning historical data or specific tasks may fail in novel scenarios or out-of-distribution cases \cite{wei2022emergentabilitieslargelanguage,schaeffer2023emergentabilitieslargelanguage}, which also raises the urgent need for a generalized model selection framework for LLMs. 

\begin{figure}[t]
    \centering
    \includegraphics[width=\linewidth]{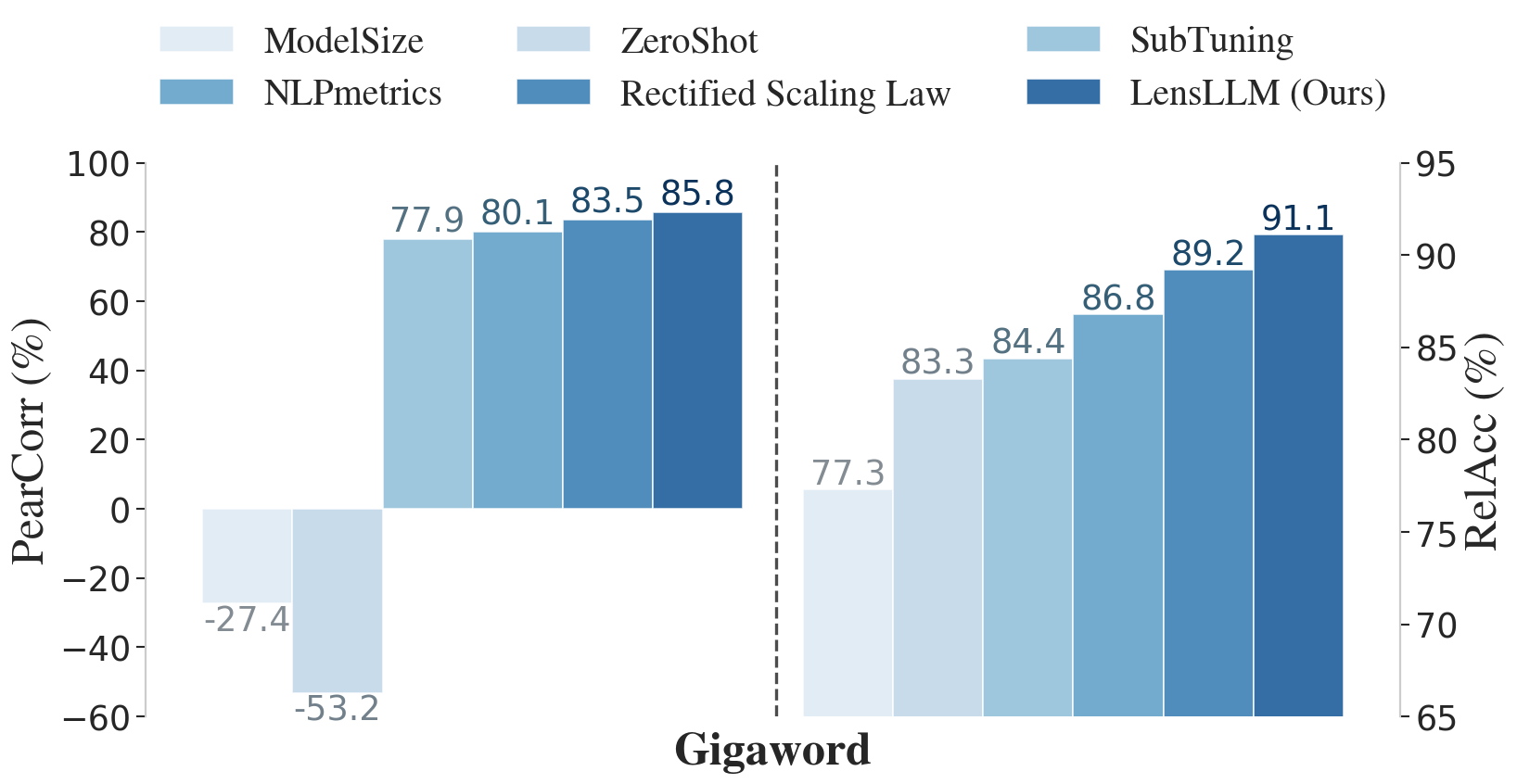}
    \caption{Our model demonstrates superior performance on Gigaword~\cite{see2017pointsummarizationpointergeneratornetworks}, achieving a Pearson Correlation Coefficient of up to 85.8\% and a Relative Accuracy of up to 91.1\%, surpassing 5 state-of-the-art methods for LLM selection.  (Higher values indicate better performance)
    }
    \label{fig:selection_comparison}
\end{figure}

Model selection is a long-standing research problem in the machine-learning community. Traditional model selection approaches~\cite{yang2023evaluatingnaturallanguageprocessing}, designed for small-scale machine learning models, are inadequate for LLMs due to poor generalization and prohibitive computational costs.
More recently, the machine learning community has observed increasing attention on characterizing the unique behaviors of modern LLMs and developing tailored selection methods~\cite{lin2024selectinglargelanguagemodel}. However, these methods remain mostly heuristic, often depending on strong assumptions or iterative trial-and-error processes. 
These limitations underscore the necessity for a theoretical grounding for LLM selection that can elucidate fine-tuning dynamics and achieve better generalization in various downstream applications.

Despite the importance of LLM selection, there are two fundamental challenges that largely remain nascent. \textbf{C1. Theoretical Understanding}: There is limited theoretical analysis characterizing the dynamic behaviors in the LLM fine-tuning process, particularly the pre-power phase that emerges in low-data regimes. A rigorous theoretical foundation is crucial for providing a proper lens in examining LLM selection and achieving model generalization. \textbf{C2. Accurate and efficient LLM Selection Framework}: There always exists a tension between model selection accuracy and computation cost. How can we find an optimal trade-off framework that could accurately select models across various tasks in resource-constrained scenarios?

In this work, we present three key contributions that advance the field of LLM selection. First, we develop a \emph{PAC-Bayesian Generalization Bound} that reveals distinct pre-power and power phases in fine-tuning. This theoretical foundation provides a novel framework for examining LLM selection, enabling robust model generalization across diverse tasks. Second, building on our theoretical results, we introduce \model{}, which leverages NTK \citep{jacot2018neural} to approximate the LLM fine-tuning scaling laws. By modeling the transition between pre-power and power phases, our approach not only achieves accurate LLM selection across tasks but also reduces computational costs by more than 50\% compared to our best competitor SubTuning~\cite{kaplun2023moreselectivelayerfinetuning}. Finally, our comprehensive empirical evaluation demonstrates the superior performance of our model. For a representative example, \cref{fig:selection_comparison} shows that our model achieves up to 91.1\% relative accuracy and 85.8\% Pearson correlation. Moreover, it reduces computational costs by up to 88.5\% compared to FullTuning while maintaining comparable performance. We open-source our proposed \model{} model and corresponding results at \href{https://github.com/Susan571/LENSLLM.git}{LensLLM.io}.

\begin{table}[t]
\caption{Notations. }
\label{table:notation}
\centering
\scalebox{0.93}{
\begin{tabular}{l|p{0.4\textwidth}}
\toprule
\textbf{Symbol} & \textbf{Description} \\
\midrule
$\mathcal{X}$ & Input feature space\\
$\mathcal{D}$ & Probability distribution of samples\\
$\mathcal{M}$ & Space of LLMs \\
$n$ & Sample size\\
$l$ & Depth of neural architecture (number of layers)\\
$L$ & Expected loss on the test distribution\\
$\widehat{L}$ & Empirical loss computed on test set\\
$\vv_i$ & Weight difference vector at layer $i$\\
$\vx$ & Feature vector from $\mathcal{X}$\\
$\vy$ & Targeted label of $\vx$\\
$\bm{\widehat{W}}_i$ & Pre-trained weight matrix at layer $i$ \\
$\bm{\widehat{W}}_i^{(s)}$ & Fine-tuned weight matrix at layer $i$ \\
$\bm{H_i^+}$ & Non-negative truncated Hessian matrix\\
\bottomrule
\end{tabular}
}
\end{table}

\section{Preliminary}\label{sec:Preliminary}
In this section, we begin with the introduction of notations in \cref{table:notation}, followed by scaling law behavior in LLM fine-tuning, generalization bound, and our problem definition.
We consider the fine-tuning of LLMs in a supervised learning setting. The key notations used throughout this paper are summarized in \cref{table:notation}. We use regular letters to denote scalars (\eg, $l$), italics letters to denote space and distribution (\eg, $\mathcal{X}$), boldface lowercase letters to denote vectors (\eg, $\vv_i$), and boldface uppercase letters to denote matrices (\eg, $\bm{\widehat{W}}_i$).

\subsection{Scaling Laws in LLM Fine-Tuning}
With the rise of LLMs, researchers in the machine-learning community have increasingly focused on understanding and characterizing the behavior of these models during the fine-tuning process. A key area of investigation is the scaling laws governing model performance, which has been extensively studied for the pre-training stage~\cite{henighan2020scaling,kaplan2020scaling,bahri2024explaining}. These studies propose that model performance follows a predictable power-law relationship, as described below:

\begin{definition}[Power-law in \citet{kaplan2020scaling}]
The scaling loss $L(\cdot,\cdot)$ is a function of model size $N$ and training set size $D$, i.e.,

\begin{equation}
    L(N,D) = \left(\frac{A}{N^{\alpha_N}} + \frac{B}{D^\beta}\right)^\alpha 
\end{equation}

\end{definition}
Here ${A, B, \alpha, \alpha_N , \beta}$ are universal parameters to be fitted. This groundbreaking work revealed that model performance consistently improves as a power law with increases in three critical factors: model size, training data, and computational power. These findings laid a strong foundation for understanding model behavior during pre-training.

Building on this, \citet{lin2024selectinglargelanguagemodel} demonstrated that fine-tuning performance depends not only on model size $N$ but also on various architectural design choices, including the number of layers, attention heads, and hidden dimensions. This intricate dependency complicates model selection using traditional scaling laws. However, to predict performance for specific models, a simplified version of the scaling law can be employed by excluding architectural considerations. Scaling laws for fixed models, as proposed in~\cite{kaplan2020scaling,hernandez2021scaling,tay2022scale}, exhibit the following form:

\begin{equation}
    L(D) = \left(\frac{B}{D^\beta} + E\right)^\alpha
\end{equation}

where $D$ represents the training set size, while $B$, $E$, $\alpha$, and $\beta$ are parameters specific to the model and task.

As illustrated in \cref{fig:comparison}, two distinct phases emerge in the scaling behavior of fine-tuning test loss ($L$) with training sample size ($D$): the pre-power phase and the power phase. The pre-power phase occurs in low-data regimes where model behavior is dominated by initialization and early training dynamics. As the training size increases and reaches a transition point, models enter the power phase, characterized by predictable scaling behavior. In this phase, the relationship between training size and test loss follows a nearly linear correlation, as widely studied in prior works~\cite{henighan2020scaling,kaplan2020scaling,bahri2024explaining}.

\begin{figure}[h!]
    \centering
    \includegraphics[width=0.95\linewidth]{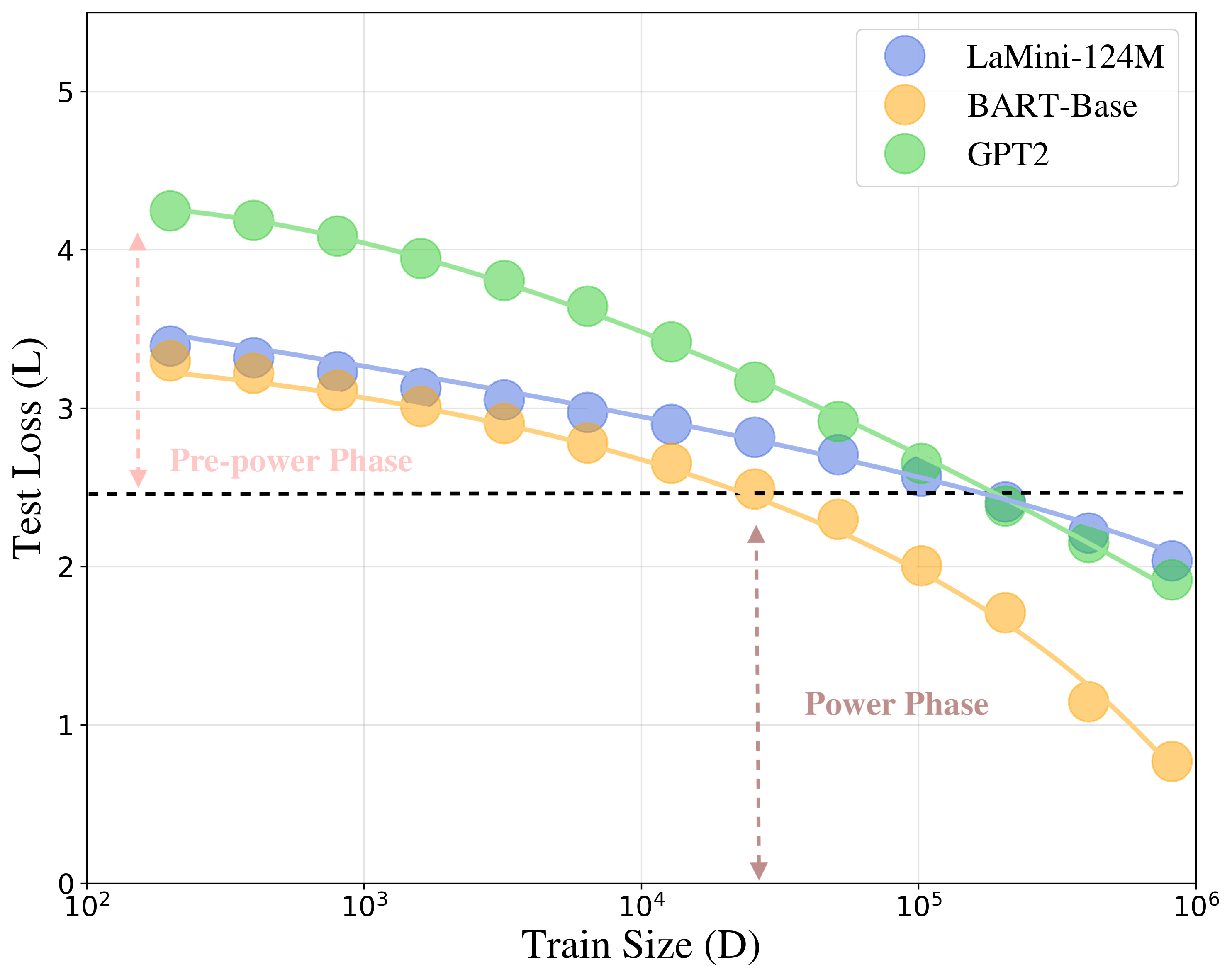}
    \vspace{-1em}
    \caption{Phase transition in fine-tuning test loss ($L$) scaling with training sample size ($D$). The data reveals a pre-power phase at small $D$, followed by the established power phase showing the linear correlation between $L$ and $D$.}
    \label{fig:comparison}
\end{figure}

Understanding the underlying mechanism of this phase transition phenomenon is crucial for effective model selection. While prior work has empirically observed the emergence of power-law behavior during fine-tuning, it has largely lacked a theoretical framework to explain when and why this transition occurs. This theoretical gap limits the ability to make informed decisions about data efficiency and model scaling. In contrast, our theoretical analysis provides a first-principled understanding of the transition dynamics. First, it enables precise predictions about when additional training data will trigger the onset of the power phase, resulting in consistent and predictable performance improvements. Second, once a model enters the power phase, our scaling framework offers guidance on how to balance the costs of further data collection against the expected performance gains.

\subsection{Generalization Bound}
While empirical studies offer valuable insights, theoretical frameworks are equally essential for understanding the dynamics of fine-tuning. Among these, generalization bound stands out as a powerful tool, providing rigorous mathematical insights into fine-tuning performance. Prior work by \citet{ju2022robust} introduced a generalization bound tailored to fine-tuned feed-forward neural networks, serving as a preliminary lens for analyzing fine-tuning dynamics. However, their analysis does not explicitly account for transformer-specific architectural elements, which limits its applicability to modern large-scale language models. 

\begin{theorem}[Generalization Bound in \citet{ju2022robust}]
Assume the activation functions $\phi_i(\cdot)$ for all $i = 1, \ldots, l$ and the loss function $L$ are all twice-differentiable, and their first-order and second-order derivatives are all Lipschitz-continuous. Suppose $L(f_{\widehat{W}}(\vx),\vy)$ ($L(f_{\widehat{W}})$ used in the following for simplicity) is bounded by a fixed value $C$ for any $\vx \in \mathcal{X}$ with class label $\vy$. Given an $l$-layer network $f_{\widehat{W}}$, with probability at least $0.99$ (or equivalently, almost everywhere in the statistical sense), for any fixed $\epsilon$ close to zero, we have
\begin{equation}
L(f_{\widehat{W}}) \leq (1 + \epsilon) \widehat{L}(f_{\widehat{W}}) + \frac{(1 + \epsilon)\sqrt{C} \sum_{i=1}^{l} \sqrt{\mathcal{H}_i}}{\sqrt{n}} + \xi    
\end{equation}
where $\mathcal{H}_i$ is defined as $\max_{(x,y)\in \mathcal{D}} \vv_i^\top \mathbf{H_i^+}[L(f_{\widehat{W}})]\vv_i$, for all $i = 1, \ldots, l$, and $\xi = O(n^{-3/4})$ represents an error term from the Taylor's expansion.
\label{claim:bound}
\end{theorem}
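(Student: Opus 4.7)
The plan is to prove the bound via a PAC-Bayesian argument combined with a second-order Taylor expansion of the loss around the pre-trained weights, exploiting the fact that the layer-wise perturbation $\vv_i = \bm{\widehat{W}}_i^{(s)} - \bm{\widehat{W}}_i$ is typically small after fine-tuning. Concretely, I would place a product Gaussian posterior $Q = \prod_{i=1}^{l} \mathcal{N}(\bm{\widehat{W}}_i^{(s)}, \sigma_i^2 I)$ and a matching prior $P = \prod_{i=1}^{l} \mathcal{N}(\bm{\widehat{W}}_i, \sigma_i^2 I)$, so that the KL divergence decomposes layer-wise as
\[
\mathrm{KL}(Q \,\|\, P) \;=\; \sum_{i=1}^{l} \frac{\|\vv_i\|^2}{2\sigma_i^2},
\]
which already exhibits the summation-over-layers structure present in the target bound.

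The heart of the argument is to control $\E_{W \sim Q}[L(f_W)]$ by Taylor-expanding $L$ to second order about the posterior mean. The zeroth-order term reproduces $\widehat{L}(f_{\widehat{W}})$, the first-order term vanishes by the zero-mean symmetry of the Gaussian perturbation, and the quadratic term yields $\tfrac{1}{2}\E[U^\top H_i U] = \tfrac{\sigma_i^2}{2}\Tr(H_i)$ at each layer. Because $H_i$ need not be PSD, I would replace it by its truncated non-negative part $\mathbf{H_i^+}$; this substitution is legitimate for an upper bound because negative-curvature directions only decrease the expected loss and can be discarded. The cubic-order Taylor remainder is then uniformly controlled by the assumed Lipschitzness of the second derivatives, producing the $\xi = O(n^{-3/4})$ residual once $\sigma_i$ is chosen on the scale of $n^{-1/4}$.

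With these pieces assembled, I would invoke a Bernstein/Catoni-style PAC-Bayes bound (rather than the basic McAllester form) to obtain the multiplicative $(1+\epsilon)$ factor on both $\widehat{L}(f_{\widehat{W}})$ and the complexity term, with the $\sqrt{C}$ tracking the boundedness of $L$ that controls its sub-exponential norm. Optimizing $\sigma_i^2$ to balance the KL contribution against the Hessian-weighted variance term, roughly $\sigma_i^2 \propto \|\vv_i\|/\sqrt{n\, \vv_i^\top \mathbf{H_i^+} \vv_i}$, reproduces exactly the $\sum_{i=1}^l \sqrt{\mathcal{H}_i}/\sqrt{n}$ structure after the worst-case over $(\vx,\vy) \in \mathcal{D}$ is absorbed into the definition of $\mathcal{H}_i$. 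The $0.99$ confidence level comes from setting the PAC-Bayes failure probability to $0.01$.

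The main obstacle I anticipate is rigorously justifying the replacement of $H_i$ by $\mathbf{H_i^+}$ while simultaneously taking the worst case over $(\vx,\vy) \in \mathcal{D}$ inside the Hessian quadratic form: one must diagonalize $H_i(\vx,\vy)$ in a basis that depends on the sample point and show that the discarded negative-eigenvalue contributions do not accumulate error through the layer-wise sum or spoil the $\sqrt{n}$ concentration. A secondary difficulty is propagating uniform Lipschitz control of the third-order Taylor remainder through the composition of $l$ transformer layers, since this is precisely where the $n^{-3/4}$ rate is most fragile and where the boundedness of $L$ by $C$ and the Lipschitz assumptions on the activations must be combined carefully via the chain rule.
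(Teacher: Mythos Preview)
Your proposal is essentially correct and matches the approach taken in the paper. Note that the paper does not itself prove Theorem~\ref{claim:bound}: it is quoted as a preliminary result from \citet{ju2022robust}. The paper's own contribution is Theorem~\ref{thm: bound}, the transformer extension, proved in Appendix~A, and that proof follows exactly the template you outline: a layer-wise Gaussian prior/posterior giving $\mathrm{KL}(\mathcal{Q}\|\mathcal{P}) = \sum_i \|\vv_i\|^2/(2\sigma_i^2)$; a three-part decomposition $L - \tfrac{1}{\beta}\hat L = (L - L_{\mathcal Q}) + (L_{\mathcal Q} - \tfrac{1}{\beta}\hat L_{\mathcal Q}) + \tfrac{1}{\beta}(\hat L_{\mathcal Q} - \hat L)$ with second-order Taylor expansion on the first and third pieces and a Catoni-type PAC-Bayes bound on the middle piece; replacement of $\bm{H}_i$ by the truncated $\bm{H}_i^+$ for the upper bound; and the substitution $\epsilon = (1-\beta)/\beta$ to produce the multiplicative $(1+\epsilon)$ factor. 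Your anticipated obstacles are also the right ones, and the paper handles them in the way you sketch.
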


\subsection{Problem Definition}
In the context of LLM selection, we aim to identify the optimal model from a set of candidates for specific downstream tasks under resource-constrained scenarios. Without loss of generalization, we denote $S$ as training dataset from $\mathcal{D}$, and $\mathcal{M} = \{m_1, m_2, ..., m_k\}$ as a set of candidate models. For each model $m_i$, there is associated with a feature vector $\vx_i$ representing its characteristics (e.g., model size, architecture, training data). 

Given the notations above, we formally define the problem as follows:

\begin{problem}\textbf{LLM Selection in Resource-Constrained Scenarios}\\
\textbf{Given:} (1) Limited training data $S$; (2) A set of candidate LLMs $\mathcal{M} = \{m_1, m_2, ..., m_k\}$ with their corresponding feature vectors $\vx$.\\
\textbf{Objective:} The optimal model $m^* \in \mathcal{M}$ on $S$ that has the best performance.
\end{problem}

\section{Model}
In this section, we introduce our framework, \model{}, for analyzing LLM selection. The core idea lies in modeling and regularizing the dynamics of transition phases observed during fine-tuning. Specifically, we begin by deriving a novel \emph{PAC-Bayesian Generalization Bound} for these transition phases, incorporating the role of the Hessian matrix to capture fine-tuning behavior. Building on this theoretical foundation, we develop the overall learning paradigm of \model{}, focusing on effectively characterizing model behavior in both the pre-power and power phases. Finally, we present an optimization algorithm for \model{}, including detailed pseudo-code to illustrate its implementation.

\subsection{Theoretical Analysis on Fine-tuning Scaling Law}

Understanding the underlying mechanism of phase transitions in fine-tuning is essential for LLM selection. While empirical studies provide valuable insights, there is still a lack of theoretical foundations to reveal the dynamics of transition phases. Here we provide a theoretical foundation, \emph{PAC-Bayesian Generalization Bound}, for understanding dynamic transition phases during LLM fine-tuning. This leads to a more nuanced analysis of fine-tuning dynamics, particularly pre-power and power phases in scaling laws. Our analysis is built upon the following key assumptions that account for transformer-specific properties:

\begin{restatable}[Smoothness]{assumption}{assumSmooth}
\label{assum:smooth}
The activation functions $\phi_i$ on layer $i$ and loss function $L$ are twice-differentiable with Lipschitz-continuous first and second derivatives \citep{allen2019convergence}:\\
(1) $\|\nabla\phi_i(\vx) - \nabla\phi_i(\vy)\| \leq L_{\phi}\|\vx-\vy\|$, where $L_{\phi}$ is the Lipschitz constant that bounds how fast the gradient of $\phi_i$ can change;\\
(2) $\|\nabla^2\phi_i(\vx) - \nabla^2\phi_i(\vy)\| \leq L_{\phi'}\|\vx-\vy\|$, where $L_{\phi'}$ is the Lipschitz constant for the second derivative of $\phi_i$.
\end{restatable}

\begin{restatable}[Boundedness]{assumption}{assumBound}
\label{assum:bound} 
Following \citet{wei2019regularization}, the loss function and input features satisfy uniform bounds: $L \leq C$ and $\|\vx\|_2 \leq B$.
\end{restatable}

\begin{restatable}[Transformer Stability]{assumption}{assumTransformer}
\label{assum:transformer}
Based on \citet{liu2020loss}, transformer components satisfy:

(1) Attention mechanisms are $L_A$-Lipschitz continuous \citep{kim2021lipschitz};

(2) Attention scores are bounded: $\|\text{softmax}(\bm{QK}^T)\|_{\infty} \leq M$ \citep{dong2023attention};

(3) Residual connections maintain gradient flow: $\|\nabla f(\vx)\| \geq \delta > 0$;

(4) Layer normalization preserves scale: $\|\bm{LN(\vx)}\|_2 = \|\vx\|_2$.
\end{restatable}

These assumptions enable us to extend Theorem \ref{claim:bound} to transformers and lead to our following main theoretical foundation:

\begin{restatable}[\emph{PAC-Bayesian Generalization Bound}]{theorem}{thmMain}
\label{thm: bound}
For a fine-tuned transformer model $f_{\hat{w}}$ satisfying Assumptions \ref{assum:smooth}-\ref{assum:transformer}, with probability at least 0.99 and any fixed $\epsilon > 0$:
\begin{equation}
\scalebox{0.83}{$\begin{aligned}
            L(f_{\hat{w}}) 
            &\leq (\epsilon+1) \hat{L}(f_{\hat{w}}) + 
                \frac{(1 + \epsilon) \sqrt{C} \sum_{i=1}^l\sqrt{h_i}}{\sqrt{n}} + 
                O(n^{-\frac{3}{4}})
               \end{aligned}$} 
\end{equation}              
where $h_i \geq \max_{(x,y) \in D} \vv_i^T \bm{H_i^}+ [L(f_{\hat{w}})] \vv_i$.
\end{restatable}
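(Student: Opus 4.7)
The plan is to adapt the proof of Theorem~\ref{claim:bound} from feed-forward networks to transformers, with Assumption~\ref{assum:transformer} supplying the additional structure needed to control the transformer-specific couplings. First I would write the fine-tuned parameters as $\bm{\widehat{W}}^{(s)} = \bm{\widehat{W}} + \vv$ with $\vv = (\vv_1,\ldots,\vv_l)$ the per-layer displacement, and perform a second-order Taylor expansion of $L(f_{\bm{\widehat{W}}^{(s)}})$ around the pre-trained weights. By Assumption~\ref{assum:smooth} (twice-differentiability with Lipschitz first and second derivatives), the remainder is uniformly $O(\|\vv\|^3)$ in $(\vx,\vy)$, which after taking expectations over the sample will feed into the stated $O(n^{-3/4})$ error term.

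Next I would split the resulting quadratic form $\tfrac{1}{2}\vv^\top \bm{H}[L]\vv$ into its layerwise diagonal blocks $\tfrac{1}{2}\sum_i \vv_i^\top \bm{H}_i[L]\vv_i$ plus cross-layer interactions. Replacing each $\bm{H}_i$ by its positive-semidefinite truncation $\bm{H}_i^+$ only loosens the upper bound, and I would argue that the cross-layer terms can be absorbed via Cauchy--Schwarz into a sum of the diagonal radii $\sqrt{\vv_i^\top \bm{H}_i^+ \vv_i}$. This is precisely where Assumption~\ref{assum:transformer} enters: the $L_A$-Lipschitz attention and the bounded softmax outputs $\|\mathrm{softmax}(\bm{QK}^\top)\|_\infty \le M$ cap off-diagonal Hessian blocks in operator norm, the scale-preserving LayerNorm keeps each $\|\vv_i\|$ controlled layer by layer, and the residual lower bound $\|\nabla f(\vx)\| \geq \delta > 0$ prevents vanishing-gradient paths that would otherwise blow up the cubic Taylor remainder.

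Having reduced the loss perturbation to $\widehat{L}(f_{\bm{\widehat{W}}}) + \tfrac{1}{2}\sum_i \vv_i^\top \bm{H}_i^+ \vv_i + O(\|\vv\|^3)$, I would then invoke a PAC-Bayesian perturbation argument in the spirit of McAllester and Neyshabur: take a Gaussian prior centered at $\bm{\widehat{W}}$ and a Gaussian posterior centered at $\bm{\widehat{W}}^{(s)}$ with per-layer variances calibrated so that each layer's quadratic form equals $h_i$. The KL divergence then contributes $\sum_i h_i$; combined with the boundedness Assumption~\ref{assum:bound} ($L \le C$) and McAllester's PAC-Bayes inequality, this yields a $\sqrt{C \sum_i h_i / n}$ scaling, which upper bounds by $\sqrt{C}\sum_i \sqrt{h_i}/\sqrt{n}$ after per-layer posterior tuning. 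The multiplicative $(1+\epsilon)$ factor would emerge by optimizing the standard PAC-Bayes tradeoff parameter, and taking a union bound over the Taylor remainder event and the PAC-Bayes event (each of probability at least $0.995$) delivers the claimed $0.99$ confidence.

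The main obstacle I expect is the cross-layer Hessian analysis: because self-attention couples all tokens globally and residual connections add information additively across depth, the Hessian of $L$ is genuinely not block-diagonal, and the off-diagonal blocks must be shown to be dominated (up to the $O(n^{-3/4})$ slack) by the diagonal terms. Verifying that Assumption~\ref{assum:transformer}(1)--(4) suffices for this domination---in particular, that the Lipschitz-attention and bounded-softmax conditions translate into the correct quantitative bound on the off-diagonal operator norms while the residual lower bound keeps the cubic remainder from absorbing the leading-order signal---is the technically delicate step where the standard feed-forward argument of \citet{ju2022robust} must be substantively extended rather than merely transcribed.
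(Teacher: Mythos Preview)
Your proposal assembles the right ingredients—Gaussian prior/posterior centered at pre-trained and fine-tuned weights, PAC-Bayes, second-order Taylor control, truncated Hessians—but the way you wire them together departs from the paper at one structurally important point.

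You Taylor-expand the \emph{deterministic} loss $L(f_{\bm{\widehat{W}}^{(s)}})$ in the displacement $\vv$. That expansion relates two population quantities (fine-tuned versus pre-trained) and does not by itself bridge population loss to empirical loss, which is what the theorem needs. The paper instead introduces the \emph{perturbed} losses $L_{\mathcal{Q}},\hat L_{\mathcal{Q}}$ (expectations over Gaussian noise injected at the fine-tuned weights) and splits
\[
L(f_{\hat w}) - \tfrac{1}{\beta}\hat L(f_{\hat w}) \;=\; \bigl(L - L_{\mathcal{Q}}\bigr) \;+\; \bigl(L_{\mathcal{Q}} - \tfrac{1}{\beta}\hat L_{\mathcal{Q}}\bigr) \;+\; \tfrac{1}{\beta}\bigl(\hat L_{\mathcal{Q}} - \hat L\bigr).
\]
The outer two pieces are noise-stability terms, and \emph{those} are what get Taylor-expanded—in the Gaussian perturbation, not in $\vv$—yielding the Hessian contributions and the $O(n^{-3/4})$ remainder. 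The middle piece is handled by a Catoni-type PAC-Bayes inequality with free parameter $\beta$, and the substitution $\epsilon=(1-\beta)/\beta$ is precisely where the multiplicative $(1+\epsilon)$ appears. The $\sqrt{h_i}/\sqrt{n}$ scaling then emerges not from the KL alone but from optimizing the per-layer noise covariance $\bm{\Sigma}_i$ to balance the KL term $\tfrac12\sum_i \vv_i^\top \bm{\Sigma}_i^{-1}\vv_i$ against the second-order noise-stability contribution; the paper carries this out explicitly (its Inequation~2) via Cauchy--Schwarz against $\bm{H}_i^{+}$.

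Your designated ``main obstacle''—dominating cross-layer Hessian blocks through the Lipschitz-attention and bounded-softmax parts of Assumption~\ref{assum:transformer}—is not where the paper invests its effort. The paper invokes Assumptions~\ref{assum:smooth}--\ref{assum:transformer} only to certify that the Lipschitz and boundedness constants underlying the Taylor remainder and the PAC-Bayes concentration remain finite for transformer blocks; the layerwise Hessian structure is inherited directly from \citet{ju2022robust} rather than rebuilt through an off-diagonal operator-norm analysis. So the program you sketch in your last paragraph would address a harder problem than the paper actually tackles, while leaving the three-term decomposition and the covariance-optimization step—the actual engine of the bound—unspecified.
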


This bound provides a theoretical foundation for characterizing the fine-tuning dynamics of transformers and serves as a basis for analyzing the transition phases. The proof for this theorem is provided in Appendix \ref{appA:Proof of PAC-Bayes Generalization Bound on Transformers}. 

Understanding how language models behave during fine-tuning is essential for making informed decisions about model selection. While we have learned from empirical studies, we still lack a theoretical framework to explain the mechanism of transition between different phases in fine-tuning. This understanding could help optimize the trade-off between computational costs and model performance. To this end, we aim to use our proposed \emph{PAC-Bayesian Generalization Bound} to explain pre-power and power phases in fine-tuning. 

First, we start from the following property of the Hessian-related term $h_i$:

\begin{proposition}
Let $\{h_i\}_{i=1}^l$ be a sequence of Hessian-related values that satisfy $h_i \geq \max_{(x,y) \in D} \vv_i^T \bm{H_i^}+ [L(f_{\hat{w}})] \vv_i$:\\ \\
(1) By applying Cauchy-Schwarz theorem \cite{steele2004cauchy}, we have the following inequality:
\vspace{-0.5em}
    \begin{equation}
        \sum_{i=1}^l \sqrt{h_i} \leq \sqrt{l\sum_{i=1}^l h_i}
    \end{equation}    
\vspace{-1em}
\\
(2) Based on the definition of $h_i$, we have the following upper bound:
    \begin{equation}
        h_i \leq C_2 n^{-\beta_2}
    \end{equation}
    where $C_2$ is independent of $n$ and $\beta_2 \leq \beta_1$.
\end{proposition}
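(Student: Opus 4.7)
For part (1), the plan is an almost one-line application of the Cauchy--Schwarz inequality. I would pair the positive scalars $\{\sqrt{h_i}\}_{i=1}^l$ against the constant sequence $\{1\}_{i=1}^l$, yielding
\begin{equation}
\left(\sum_{i=1}^l \sqrt{h_i}\cdot 1\right)^2 \leq \left(\sum_{i=1}^l h_i\right)\left(\sum_{i=1}^l 1^2\right) = l\sum_{i=1}^l h_i,
\end{equation}
and taking the square root of both sides delivers the stated inequality. No extra machinery is needed beyond the fact that $h_i\geq 0$, which follows because the truncated Hessian $\bm{H_i^+}$ is positive semidefinite by construction.

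For part (2), the plan is to split the bound into a spectral piece controlling $\bm{H_i^+}$ and a weight-perturbation piece controlling $\vv_i$. First I would decompose
\begin{equation}
\vv_i^\top \bm{H_i^+}[L(f_{\hat w})]\vv_i \;\leq\; \|\bm{H_i^+}[L(f_{\hat w})]\|_{\mathrm{op}}\;\|\vv_i\|_2^2.
\end{equation}
The operator norm factor is handled by Assumption~\ref{assum:smooth} (twice-differentiable $L$ with Lipschitz second derivatives) together with Assumption~\ref{assum:transformer}, items (1), (2), and (4): the attention Lipschitz constant $L_A$, the bound $M$ on softmax entries, and scale-preserving layer normalization collectively imply that each layer's contribution to the loss Hessian has operator norm bounded by a constant $K_i$ independent of $n$. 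The perturbation factor $\|\vv_i\|_2^2 = \|\bm{\widehat{W}}_i^{(s)} - \bm{\widehat{W}}_i\|_2^2$ is the nontrivial piece: the plan is to invoke standard stability-of-empirical-risk-minimization arguments for smooth losses to obtain $\|\vv_i\|_2^2 \leq C_2' n^{-\beta_2}$, where $\beta_2$ inherits its value from the rate at which the fine-tuned parameters concentrate around the pre-trained initializer. Chaining the two bounds then gives $h_i \leq C_2 n^{-\beta_2}$ with $C_2 = K_i C_2'$ independent of $n$.

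The main obstacle will be the comparison $\beta_2 \leq \beta_1$, since $\beta_1$ controls the scaling rate of the empirical loss term $\widehat{L}(f_{\hat w})$ that appears alongside $h_i$ in the generalization bound of Theorem~\ref{thm: bound}. The plan here is to argue that weight-space displacement $\|\vv_i\|$ cannot decay faster than the loss it induces: if fine-tuning moves the weights by an amount that scales as $n^{-\beta_2/2}$, then, by the $L_\phi$- and $L_{\phi'}$-Lipschitz structure of the network together with the lower bound $\|\nabla f(\vx)\|\geq \delta$ from Assumption~\ref{assum:transformer}(3), the induced change in loss is at most of the same polynomial order. Consequently $\beta_2$ cannot exceed the scaling exponent $\beta_1$ of the empirical loss term, giving $\beta_2 \leq \beta_1$ as claimed. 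I would defer the explicit constants to the appendix and only record in the main text the three ingredients: (i) Cauchy--Schwarz for part (1); (ii) spectral-times-perturbation decomposition for part (2); and (iii) the Lipschitz-based comparison that enforces $\beta_2 \leq \beta_1$.
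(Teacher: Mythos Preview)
Your treatment of part~(1) is correct and matches the paper's argument exactly: both pair $\{\sqrt{h_i}\}$ against the constant vector of ones and apply Cauchy--Schwarz.

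For part~(2), however, your decomposition inverts the paper's, and the inversion introduces a genuine gap. You bound the Hessian factor $\|\bm{H_i^+}\|_{\mathrm{op}}$ by a constant independent of $n$ and try to place all the $n$-decay in the weight-perturbation factor $\|\vv_i\|_2^2$. The paper does the opposite: it bounds $\|\vv_i\|_2^2 = \|\bm{W}_i - \bm{\hat{W}}_i^{(s)}\|_F^2$ by a constant $B$ independent of $n$ (via a sequence-length-independent norm bound on transformer layers), and then shows that the \emph{trace} of the loss Hessian scales as $n^{-\beta_1}$ by writing the empirical Hessian as an average of per-sample Hessians, relating $\text{tr}(\bm{H})$ to the gradient variance through a Gauss--Newton-type identity, and invoking the $n^{-\alpha}$ scaling of gradient variance during fine-tuning. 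Chaining $h_i \leq B \cdot \text{tr}(\bm{H}_i) \leq B\, C_1 n^{-\beta_1}$ then yields the claim with $\beta_2 = \beta_1$ (so $\beta_2 \leq \beta_1$ is automatic).

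Your route fails at the step $\|\vv_i\|_2^2 \leq C_2' n^{-\beta_2}$. Recall that $\vv_i$ is the displacement between the \emph{pre-trained} weights $\bm{\hat{W}}_i^{(s)}$ and the \emph{fine-tuned} weights $\bm{\hat{W}}_i$. Stability-of-ERM arguments control the sensitivity of a learned parameter to perturbing one training sample; they do not say that fine-tuned weights converge back to the source-domain initializer as $n$ grows. If the target task optimum differs from the pre-trained weights (the generic case), $\|\vv_i\|$ converges to a fixed nonzero limit rather than decaying polynomially in $n$. Consequently the $n^{-\beta_2}$ decay must come from the Hessian side, as the paper argues, not from the weight-perturbation side. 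Your $\beta_2 \leq \beta_1$ argument then also collapses, since it was predicated on weight displacement driving the decay.
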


Leveraging the above key properties, we derive the following corollary \ref{cor:scaling law} to further analyze the fine-tuning dynamics. Detailed proof has been provided in Appendix \ref{appB:Proof of Scaling Behavior}.

\begin{restatable}{corollary}{corScaling}
\label{cor:scaling law}
For any $\epsilon > 0$, with probability over 0.99, under Assumptions 1-3, considering the properties of the Hessian matrix, the PAC-Bayesian Generalization Bound could be extended to:
\vspace{0.5em}
\begin{equation}
    L(f_{\hat{w}}) \leq (1 + \epsilon)\hat{L}(f_{\hat{w}}) + C_3 n^{-\beta_3} + O(n^{-\frac{3}{4}})
\vspace{0.5em}
\end{equation}
where $C_3 = \sqrt{C\cdot l \cdot C_2}$ and $\beta_3 = \frac{\beta_2+1}{2}$ are both model/task-dependent.
\end{restatable}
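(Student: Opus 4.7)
The plan is to derive Corollary~\ref{cor:scaling law} by substituting the two properties of the preceding proposition into the PAC-Bayesian Generalization Bound of Theorem~\ref{thm: bound}. At a high level, the bound in Theorem~\ref{thm: bound} already contains the complexity term $\tfrac{(1+\epsilon)\sqrt{C}\sum_{i=1}^{l}\sqrt{h_i}}{\sqrt{n}}$, so the task reduces to showing that this term admits the clean polynomial form $C_3 n^{-\beta_3}$, while the residual $O(n^{-3/4})$ is inherited verbatim.

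First, I would apply the Cauchy--Schwarz inequality (property (1) of the proposition) to bound $\sum_{i=1}^{l}\sqrt{h_i} \leq \sqrt{l \sum_{i=1}^{l} h_i}$. This step converts the sum of square roots into the square root of a sum so that the per-layer decay of $h_i$ can be aggregated. Next I would invoke property (2), $h_i \leq C_2 n^{-\beta_2}$, and aggregate across the $l$ layers to obtain $\sum_{i=1}^{l} h_i \leq l\, C_2\, n^{-\beta_2}$ (with the depth factor $l$ absorbed into the overall constant under the convention of the corollary). Taking the square root yields $\sum_{i=1}^{l}\sqrt{h_i} \leq \sqrt{l\, C_2}\, n^{-\beta_2/2}$.

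Plugging this into the complexity term of Theorem~\ref{thm: bound} gives
\begin{equation*}
\frac{(1+\epsilon)\sqrt{C}\sum_{i=1}^{l}\sqrt{h_i}}{\sqrt{n}} \;\leq\; (1+\epsilon)\,\sqrt{C\cdot l\cdot C_2}\; n^{-(\beta_2+1)/2},
\end{equation*}
so setting $C_3 = \sqrt{C\cdot l\cdot C_2}$ and $\beta_3 = \tfrac{\beta_2+1}{2}$ recovers the target inequality $L(f_{\hat w}) \leq (1+\epsilon)\hat L(f_{\hat w}) + C_3 n^{-\beta_3} + O(n^{-3/4})$. The residual $O(n^{-3/4})$ is carried over unchanged from Theorem~\ref{thm: bound}, and the probability statement (at least $0.99$) is inherited directly since no new random event is introduced.

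I expect the main obstacle to be conceptual rather than algebraic: one must carefully justify the regime in which $\beta_3$ relative to $\tfrac{3}{4}$ determines which term dominates. When $\beta_2 > \tfrac{1}{2}$, so that $\beta_3 > \tfrac{3}{4}$, the $C_3 n^{-\beta_3}$ term controls the rate and yields the observed power-law scaling; in low-data regimes where the Hessian bound has not yet kicked in with its fastest decay, the Taylor residual $O(n^{-3/4})$ dominates, which corresponds empirically to the pre-power phase highlighted in \cref{fig:comparison}. I would therefore conclude the proof with a brief remark situating both $C_3$ and $\beta_3$ as model- and task-dependent quantities, which is precisely what motivates the data-driven fitting carried out by \model{} in the sections that follow.
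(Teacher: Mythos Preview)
Your proposal is correct and follows essentially the same route as the paper's proof in Appendix~\ref{appB:Proof of Scaling Behavior}: apply Cauchy--Schwarz to collapse $\sum_i\sqrt{h_i}$ into $\sqrt{l\sum_i h_i}$, substitute the per-layer bound $h_i\le C_2 n^{-\beta_2}$, and read off $C_3=\sqrt{C\cdot l\cdot C_2}$ and $\beta_3=(\beta_2+1)/2$ after dividing by $\sqrt{n}$. Your parenthetical about absorbing the extra depth factor into $C_2$ matches the paper's implicit convention, and your closing discussion of which term dominates is exactly the content of Remarks~1--3.
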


\textbf{Remark 1 (Pre-power Phase)}: The model's performance improves slowly during initial fine-tuning, with generalization error decreasing at rate $O(n^{-\frac{3}{4}})$. This phase is marked by high Hessian values and significant parameter sensitivity, necessitating careful tuning and substantial data for reliable adaptation.\\
\textbf{Remark 2 (Power Phase)}: As $n$ increases, the error scaling transitions to $C_3n^{-\beta_3}$, becoming the dominant term. This phase demonstrates reduced Hessian values and enhanced stability, enabling more aggressive parameter updates and improved data efficiency.\\
\textbf{Remark 3 (Phase Transition)}: The transition from the pre-power phase to the power phase is marked by the change in the dominant constant factors, from $O(n^{-\frac{3}{4}})$ to $C_3n^{-\beta_3}$, which reflects the change in the Hessian values and parameter sensitivity. 

\subsection{\model{} Algorithm for LLM Selection}

Building upon our previous theoretical analysis of LLM fine-tuning through the lens of our \emph{PAC-Bayesian Generalization Bound}, we now introduce \model{}, a novel NTK-based framework for model selection. Our approach provides a bridge between our theoretical analysis and practical model selection by leveraging the NTK to transformers so as to capture fine-tuning dynamics.

To formalize this connection, we first define an NTK matrix for a transformer model $f(\cdot,\cdot)$ with parameters $\mathbf{\theta}$ at step $t$:
\vspace{-0.1em}
\begin{equation}
\mathbf{\Theta}_t(\vx, \vx') = \nabla_{\theta}f(\vx,\mathbf{\theta}(t)) \cdot \nabla_{\theta}f(\vx',\mathbf{\theta}(t))
\label{NTK matrix}
\end{equation}
\vspace{-0.1em}
where $\vx$ and $\vx'$ represent the feature vectors of pre-training and fine-tuning, respectively. These features encapsulate relevant properties of the input samples that influence the fine-tuning process. To simplify, we use $\mathbf{\Theta}$ to present the NTK matrix in the following.

To further characterize the mechanism of transition phases in fine-tuning, we define the following NTK-based test loss function on transformers:

\begin{equation}
F(\mathbf{\Theta}, t) = \left|\left|e^{-\eta \mathbf{\Theta} t} (f_0(\bm{X}) - \vy) \right|\right|_2^2
\end{equation}
where $\eta$ is the learning rate, $t$ represents the early stopping time in training steps during fine-tuning, $f_0(\bm{X})$ denotes initial outputs, and $\vy$ represents true labels.

Motivated by these theoretical insights and their alignment with our theoretical analysis, we propose \model{}, a Hessian-aware rectified scaling model:
\begin{equation}
L(D) = \frac{B}{F(\mathbf{\Theta}, t) + D^{\beta}} + E
\end{equation}
where $F(\Theta, t)$ is the adapted NTK-based test loss function on transformer, $D$ is the number of training data, $\beta$ denotes the learning difficulty, B adjusts the initial test loss and E denotes the optimal loss of the model given an infinite amount of data. 

The design of this loss function is intentional and serves two key purposes: (1) It models the competing effects between the transformer's intrinsic learning dynamics (captured by NTK) and the dataset size;
(2) It naturally incorporates pre-trained knowledge through the initial state $f_0(\bm{X})$. In particular, the strategic placement of $F(\mathbf{\Theta}, t)$ in the denominator represents a significant advancement over traditional rectified scaling laws. This positioning explicitly accounts for pre-trained data influence through the NTK term, while the additive relationship between $F(\mathbf{\Theta}, t)$ and $D^{\beta}$ creates a unified framework that simultaneously captures both pre-training effects and fine-tuning data scaling. 

\begin{algorithm}[h]
\caption{\model{} Algorithm}
\label{Alg}
\begin{algorithmic}[1]
\renewcommand{\algorithmicrequire}{\textbf{Input:}}
\renewcommand{\algorithmicensure}{\textbf{Output:}}
\REQUIRE ~~\\
   Training subset $S$ with size $D$, models from space $\mathcal{M}$, Regression threshold $\gamma$, Stop threshold $\tau$
\ENSURE ~~\\
    Predicted performance score $r$ on the full dataset: $ r =\exp(\psi(\log |D|))$ and minimal proportion of training data needed to reach the Pareto-Optimality curve $s=\frac{1}{2^{a}}$.
\STATE Initialize collection $\mathcal{C} = \{\}$ for (training size, test loss) pairs and $a=1$ for iteration steps.
\WHILE{TRUE}
   \STATE Train $m \in \mathcal{M}$ on $S$ and obtain NTK matrix $\mathbf{\Theta}$ and training steps $t$ and corresponding loss $\widehat{L}$.
   \IF{$|\mathcal{C}| \geq \gamma$}
       \STATE Train regression estimator $\psi$ on $\mathcal{C}$
       \STATE Calculate deviation set $\Delta = \{|\log \widehat{L} - \psi(\log D)|\}$ for all pairs in $\mathcal{C}$ and the standard deviation $\sigma$ of the fitting residuals 
       \IF{$s=|\log \widehat{L} - \psi(\log D)|/\sqrt{\sigma} > \tau$}
           \STATE \textbf{break}
       \ENDIF
   \ENDIF
   \STATE Add $(\log |\mathcal{D}|, \log \widehat{L})$ to $\mathcal{C}$
   \STATE Randomly select samples from $S$ with $\frac{D}{2}$ as $S'$
   \STATE Update $S \leftarrow S'$
   \STATE Update $a \leftarrow a+1$
\ENDWHILE
\end{algorithmic}
\end{algorithm}

Algorithm \ref{Alg} presents a two-phase performance prediction method for LLM selection through iterative dataset reduction. Starting with an empty collection $\mathcal{C}$, the algorithm trains an LLM on progressively smaller datasets and records size-loss pairs. For each iteration, it trains the model to obtain test loss $\widehat{L}$, NTK matrix $\mathbf{\Theta}$, and training steps $t$. After collecting $\gamma$ pairs, it fits a regression estimator $\psi$ and computes a stopping signal based on the deviation from predicted values. The process continues until the deviation exceeds threshold $\tau$. 
The optimization process of our model in experiments consists of two phases: (1) a phase-fitting phase where parameter $t$ is fixed for each dataset size while optimizing $B$, $\beta$, and $E$ to fit the observed test loss, and (2) a test loss prediction phase where all parameters ($t$, $B$, $\beta$, and $E$) are jointly optimized to predict test loss across different dataset sizes and training durations. In each iteration, the dataset size is halved through random sampling until termination.

In the implementation, given a limited training dataset $S$ and a set of candidate LLMs $\mathcal{M}$, we aim to select the optimal model $m^* \in \mathcal{M}$ with the largest $r$.

\begin{figure*}[h!]
    \centering
    \includegraphics[width=\textwidth]{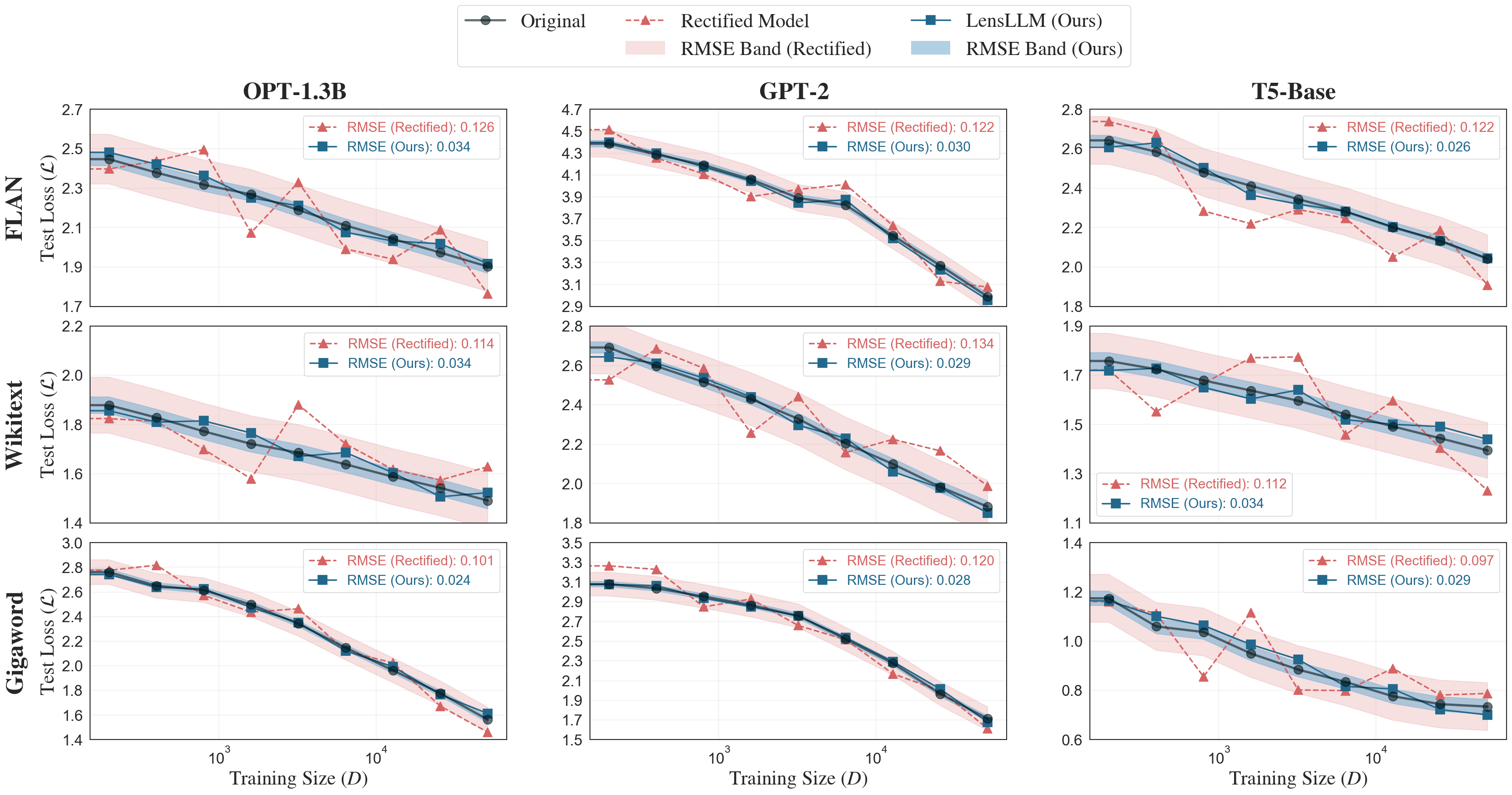}
    \caption{Performance comparison showing the superior effectiveness of \model\ (our method) across OPT-1.3b, GPT-2, and T5-base architectures on FLAN, Wikitext, and Gigaword datasets. \model\ consistently achieves significantly lower RMSE values (shown in a blue square) compared to the Rectified Scaling Law (shown in a red triangle), with notably smaller error bands indicating more stable performance.}
    \label{fig:fitting_comparison}
\end{figure*}

\section{Experiments}\label{sec:exp}

In this section, we evaluate \model{}'s performance on LLM selection through two aspects: (a) evaluating its effectiveness against baseline models, including fitting the transition phases in fine-tuning, predicting test loss on additional training data and selecting the optimal model; (b) evaluating its efficiency by calculating computational costs. Results consistently demonstrate our model's capability in accurate and efficient LLM selection.

\subsection{Experimental Setup}

\textbf{Benchmark}: For robust evaluation across various tasks, we experiment with three benchmark datasets: FLAN ~\cite{wei2021finetuned}, Wikitext~\cite{merity2016pointer}, and Gigaword~\cite{see2017pointsummarizationpointergeneratornetworks}. All of them are open-sourced on \href{huggingface2025}{Hugging Face}. To analyze performance scaling, we create smaller datasets by randomly sampling examples ranging from 200 to 1,638,400 (doubling at each step), then fine-tune and evaluate models on a separate test set.

\textbf{Baselines}: We evaluate our approach against five established baseline methods: (1) Rectified Scaling Law uses pre-trained-data-adapted scaling law model to select~\citep{lin2024selectinglargelanguagemodel}, (2) NLP metrics uses proxy generalization metrics for selection~\citep{yang2023evaluatingnaturallanguageprocessing}, (3) SubTuning uses the performance of subset fine-tuning as selection score~\citep{kaplun2023moreselectivelayerfinetuning}, (4) ModelSize uses logarithm of the number of model parameters for selection~\citep{villalobos2022machinelearningmodelsizes}, and (5) ZeroShot uses zero-shot performance as a selection criteria~\citep{kojima2023largelanguagemodelszeroshot}.

\textbf{Models}: To ensure the generality and robustness of our findings, we conduct comprehensive fine-tuning experiments across a diverse suite of pre-trained language models spanning a wide range of architectures and parameter scales. Specifically, our evaluation covers seven model families, including OPT-350M, OPT-1.3B, OPT-6.7B, T5-Small, T5-Base, Cerebras-256M, Cerebras-1.3B, mT5-Base, mT5-Large, BART-Base, BART-Large, GPT-2, LaMini-124M and LaMini-774M. This broad selection captures variability across model sizes, training objectives, and architecture types, providing a rigorous basis for evaluating the effectiveness and stability of our fine-tuning approach. 

\textbf{Optimization}: All models are fine-tuned using the AdamW optimizer with a weight decay of 0.01, and all experiments are conducted on a single NVIDIA A100 GPU with 80GB of memory. To characterize the fine-tuning dynamics across different model architectures and data sizes, we estimate ${B, E, \beta, t}$ for each model by minimizing the loss function:
$$
\resizebox{0.48\textwidth}{!}{$
\min\limits_{B,E,\beta,t} \sum_{i} \Big[ \text{LSE}\big( \log B - \log(F(\Theta, t) + D_i^\beta), \log E \big) - \log L(D_i) \Big]
$}
$$
Where $L(D_i)$ denotes the test loss of fine-tuning on the data size $D_i$, and LSE denotes the log-exp-sum operator, used for numerical stability in modeling loss scales. This optimization allows us to capture the relationship between model capacity, data size, and fine-tuning efficacy in a unified, interpretable framework. 

\textbf{Evaluation Matrics}: We use two established metrics from \citet{lin2024selectinglargelanguagemodel} to evaluate the performance of methods on LLM selection: Pearson correlation coefficient (PearCorr) and Relative Accuracy (RelAcc). PearCorr measures the correlation between selection scores and actual fine-tuning performance, evaluating a method's ability to rank models effectively. RelAcc is defined as the performance gap between the selected model and the best model normalized by the gap between the worst and best models. 

\subsection{Effectiveness Analysis on LLM selection}

We evaluate \model{}'s effectiveness for LLM selection through three analyses: (1) comparing its curve-fitting accuracy during fine-tuning against Rectified Scaling Law (the previous state-of-the-art method), (2) measuring its predictive performance using Root Mean Squared Error (RMSE) between predicted and actual test losses against Rectified Scaling Law, and (3) benchmarking LLM selection performance against five baseline methods (including Rectified Scaling Law) across FLAN, Wikitext, and Gigaword datasets using evaluation metrics.

\textbf{Curve Fitting}: We evaluate the curve-fitting accuracy capacity during fine-tuning. As shown in \cref{fig:fitting_comparison}, our model (blue square) consistently outperforms Rectified Scaling Law (red triangle) in predicting performance across architectures. For OPT-1.3b, our model provides smoother and more accurate predictions that closely track the actual test loss curve, while Rectified Scaling Law shows notable fluctuations. Our model's superiority is particularly evident in GPT-2's middle to late training stages, maintaining steady alignment with ground truth compared to Rectified Scaling Law's volatile predictions. Similarly, for T5-base, our model demonstrates more stable predictions throughout training, effectively capturing the gradual loss descent without the oscillations seen in Rectified Scaling Law.

\begin{table}[h!]
    \centering
    \caption{RMSE comparison between predicted and actual test losses ($\times 10^{-1}$) of our model and Rectified Scaling Law.}
    \label{tab:mse_comparison}
    \scalebox{0.87}{
    \begin{tabular}{l|*{2}{c}|*{2}{c}|*{2}{c}}
    \toprule
    & \multicolumn{2}{c}{Wikitext} & \multicolumn{2}{c}{FLAN} & \multicolumn{2}{c}{Gigaword} \\
    \cmidrule(lr){2-3} \cmidrule(lr){4-5} \cmidrule(lr){6-7}
    Model & Ours & Rect & Ours & Rect & Ours & Rect \\
    \midrule
    OPT-350M      & \textbf{0.2} & 1.10 & \textbf{0.32} & 1.50 & \textbf{0.26} & 0.98 \\
    OPT-1.3B      & \textbf{0.32} & 1.14 & \textbf{0.32} & 1.20 & \textbf{0.28} & 0.99 \\
    OPT-6.7B      & \textbf{0.26} & 1.32 & \textbf{0.26} & 1.31 & \textbf{0.26} & 1.46 \\
    T5-Small      & \textbf{0.35} & 1.01 & \textbf{0.28} & 1.30 & \textbf{0.3} & 1.27\\
    T5-Base       & \textbf{0.32} & 1.30 & \textbf{0.26} & 1.26 & \textbf{0.3} & 0.94 \\
    Cerebras-256M & \textbf{0.24} & 1.27 & \textbf{0.22} & 1.1 & \textbf{0.33} & 1.30 \\
    Cerebras-1.3B & \textbf{0.26} & 1.18 & \textbf{0.32} & 1.00 & \textbf{0.28} & 1.00 \\
    mT5-Base      & \textbf{0.26} & 1.17 & \textbf{0.32} & 1.22 & \textbf{0.17} & 1.07 \\
    mT5-Large     & \textbf{0.28} & 1.44 & \textbf{0.32} & 1.07 & \textbf{0.28} & 1.10 \\
    BART-Base     & \textbf{0.3} & 1.27 & \textbf{0.3} & 0.96 & \textbf{0.26} & 0.99 \\
    BART-Large    & \textbf{0.17} & 1.31 & \textbf{0.28} & 0.87 & \textbf{0.36} & 1.14 \\
    GPT-2         & \textbf{0.3} & 1.30 & \textbf{0.3} & 1.23 & \textbf{0.26} & 1.33 \\
    LaMini-124M   & \textbf{0.28} & 1.01 & \textbf{0.35} & 1.00 & \textbf{0.3} & 1.15 \\
    LaMini-774M   & \textbf{0.32} & 1.14 & \textbf{0.28} & 1.13 & \textbf{0.28} & 1.19 \\
    \bottomrule
    \end{tabular}
    }
\end{table}

\textbf{Test Loss Prediction}: We evaluate the test loss prediction performance of \model{}. \cref{tab:mse_comparison} demonstrates our model's superior performance through RMSE comparisons between predicted and actual test losses. Our model achieves significantly lower errors across all architectures: on Wikitext, errors are typically 5 times smaller (e.g., OPT-6.7B: 0.026 vs 0.132, mT5-Large: 0.028 vs 0.144); on FLAN, we maintain low RMSE (0.022-0.035) compared to Rectified Scaling Law's higher range (0.087-0.15); and on Gigaword, our model shows consistent performance below 0.036 while Rectified Scaling Law varies between 0.094-0.146. These results across three datasets and fourteen architectures confirm our model's superior accuracy in predicting training dynamics.

\begin{table*}[h!]
\centering
\caption{Model selection results (PearCorr, RelAcc) of our model (\model{}) and baselines (Rectified Scaling Law, NLPmetrics, SubTuning, ZeroShot and ModelSize) on three datasets in percentage. The best result within the same dataset is in \textbf{bold} font, and the second best result is \underline{underlined}.}
\label{tab:model_selection}
\small
\begin{tabular}{c|c|cccccc}
\toprule
Dataset & Metric & \model{} & Rectified Scaling Law & NLPmetrics& SubTuning & ZeroShot & ModelSize \\
\midrule
\multirow{2}{*}{FLAN} & PearCorr & \textbf{78.4} & 75.2 & \underline{76.1} & 70.5 & -12.8 & -29.8 \\
& RelAcc & \textbf{88.3} & \underline{85.7} & 84.7 & 80.4 & 82.3 & 75.6 \\
\midrule
\multirow{2}{*}{Wikitext} & PearCorr & \textbf{82.6} & \underline{80.3} & 78.9 & 75.4 & 6.1 & 38.0 \\
& RelAcc & \textbf{90.2} & \underline{88.8} & 87.5 & 85.1 & 84.4 & 68.5 \\
\midrule
\multirow{2}{*}{Gigaword} & PearCorr & \textbf{85.8} & \underline{83.5} & 80.1 & 77.9 & -53.2& -27.4 \\
& RelAcc & \textbf{91.1} & \underline{89.2} & 86.8 & 84.4 & 83.3 & 77.3 \\
\bottomrule
\end{tabular}
\end{table*}

\textbf{LLM Selection}: We compare our approach against five baseline methods: Rectified Scaling Law, NLPmetrics, SubTuning, ModelSize, and ZeroShot. All the performance is tested on a held-out validation set. \cref{tab:model_selection} presents comprehensive model selection results across FLAN, Wikitext, and Gigaword datasets, demonstrating our method's exceptional performance. Our approach consistently achieves superior correlation (reaching 85.8\% PearCorr) between predicted and actual model performance, significantly outperforming established baselines such as Rectified Scaling Law and NLPmetrics. The robust performance is further validated by outstanding RelAcc scores (up to 91.1\%), indicating models selected by \model{} consistently approach optimal performance levels across all tasks. This substantial improvement over baseline methods establishes our framework as a reliable guide for practitioners in optimal model selection across diverse application scenarios.

\subsection{Efficiency Analysis on LLM Selection}

\begin{figure}[h!]
    \centering
    \includegraphics[width=0.9\linewidth]{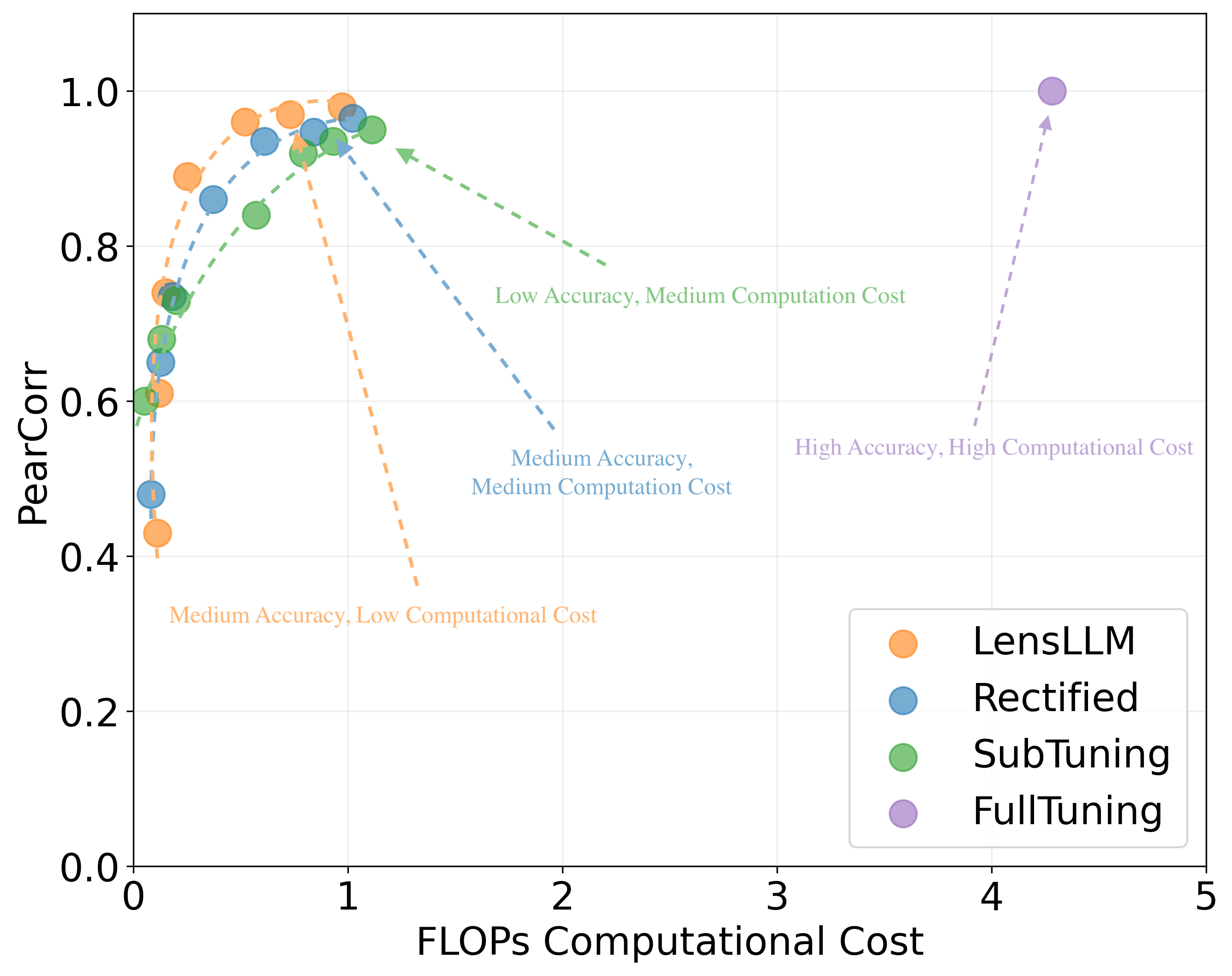}
    \caption{Pareto-Optimality curve between the selection performance and the computational costs (in units of $10^{21}$).(Smaller FLOPs means lower computational cost)}
    \label{fig:cost}
\end{figure}

\begin{table}[h!]
\centering
\caption{Comparison of computational costs (in units of $10^{21}$) on benchmarks, where \model\ achieves superior efficiency with up to 88.5\% cost reduction compared to baseline methods. }
\scalebox{0.73}{
\begin{tabular}{l|c|c|c|c}
\toprule
Method & $C_{cost}$ & FLAN & Wikitext & Gigaword\\
\midrule
FullTuning & $\sum\limits_{m \in \mathcal{M}} 6thN_m D$ & $3.35$ & $5.13$ & $4.28$ \\
\midrule
SubTuning & $\begin{array}{l}
\sum\limits_{m \in \mathcal{M}} 6thN_m sD \\
= s \cdot C_{\text{FullTuning}}
\end{array}$ & $0.92 $ & $1.84 $ & $1.11$ \\
\midrule
\model{} & $\begin{array}{l}
\sum\limits_{m \in \mathcal{M}, 2^i \leq s} 6thN_m \frac{1}{2^i}D \\
< C_{\text{SubTuning}}
\end{array}$ & $0.48 $ & $0.59 $ & $0.97 $\\
\bottomrule
\end{tabular}
}
\label{tab:comp_cost}
\end{table}

We conduct a comprehensive efficiency analysis comparing our model's computational requirements with existing methods. Following \citet{kaplan2020scaling}, we estimate the computational cost $C_{cost}$ in floating point operations (FLOPs) as $C_{cost} \sim 6ND$, where $N$ denotes the number of model parameters and $D$ represents the dataset size. For a fair comparison, we account for both the number of training epochs $t$ and hyper-parameter search rounds $h$ when calculating total computational costs across different tuning approaches. The performance is evaluated by the number of FLOPs when PearCorr of methods reach the curve (as illustrated in Figure~\ref{fig:cost}).

As shown in \cref{tab:comp_cost}, \model\ achieves superior computational efficiency through its innovative progressive sampling strategy: it reduces computational costs by up to 88.5\% compared to FullTuning while maintaining comparable performance. \model{} achieves computational costs of 0.48, 0.59, and 0.97 $\times 10^{21}$ FLOPs across tasks, which substantially outperforms both SubTuning and FullTuning.

\subsection{Ablation Study}

We perform ablation studies to assess the sensitivity of our method to the stopping criteria—specifically, the regression threshold ($\gamma$) and the stop threshold ($\tau$). The following tables summarize the impact of varying these parameters on the Pearson correlation across three datasets:

\begin{table}[h!]
\centering
\caption{Impact of $\gamma$ and $\tau$ on PearCorr on FLAN}
\small
\setlength{\tabcolsep}{6pt}
\begin{tabular}{c|ccccc}
\toprule
$\gamma$\textbackslash$\tau$ & 1 & 2 & 3 & 4 & 5 \\
\midrule
3 & 78.41 & 78.40 & 78.42 & 78.43 & 78.31 \\
4 & 78.32 & 78.39 & 78.36 & 78.40 & 78.40 \\
5 & 78.39 & 78.41 & 78.40 & 78.39 & 78.34 \\
\bottomrule
\end{tabular}
\end{table}

\begin{table}[h!]
\centering
\caption{Impact of $\gamma$ and $\tau$ on PearCorr on Gigaword}
\small
\setlength{\tabcolsep}{6pt}
\begin{tabular}{c|ccccc}
\toprule
$\gamma$\textbackslash$\tau$ & 1 & 2 & 3 & 4 & 5 \\
\midrule
3 & 85.74 & 85.74 & 85.80 & 85.71 & 85.66 \\
4 & 85.62 & 85.71 & 85.75 & 85.79 & 85.66 \\
5 & 85.64 & 85.69 & 85.66 & 85.76 & 85.72 \\
\bottomrule
\end{tabular}
\end{table}

\begin{table}[h!]
\centering
\caption{Impact of $\gamma$ and $\tau$ on PearCorr on Wikitext}
\small
\setlength{\tabcolsep}{6pt}
\begin{tabular}{c|ccccc}
\toprule
$\gamma$\textbackslash$\tau$ & 1 & 2 & 3 & 4 & 5 \\
\midrule
3 & 82.47 & 82.58 & 82.47 & 82.48 & 82.44 \\
4 & 82.49 & 82.51 & 82.48 & 82.49 & 82.54 \\
5 & 82.61 & 82.50 & 82.46 & 82.57 & 82.53 \\
\bottomrule
\end{tabular}
\end{table}

The results across FLAN, Gigaword, and Wikitext datasets demonstrate the robustness of our method with respect to the hyperparameters $\gamma$ and $\tau$, which govern the stopping criteria. For FLAN, the Pearson correlation remains highly stable, with minor fluctuations between 78.31 and 78.43. Similarly, Gigaword shows consistent performance with correlations ranging from 85.66 to 85.79. Wikitext exhibits slightly more variability, but the range (82.42 to 82.61) still reflects strong stability. These minor fluctuations across all datasets suggest that the performance of our approach is largely insensitive to moderate changes in the stopping parameters, underscoring its reliability and robustness in practical settings.

\section{Related Work}\label{sec:relatedWork}

Early model selection relied on feature similarity methods \citep{vu2020similarity, dwivedi2020similarity} to predict transfer performance by comparing source and target tasks. However, these approaches couldn't capture the complex fine-tuning dynamics revealed by our NTK-based analysis, particularly the pre-power and power phases. Recent work has introduced training-free transferability metrics, such as LogME \citep{you2021logme}, which attempt to predict model performance without additional fine-tuning. While these approaches reduce computational overhead, they fail to account for the dynamic nature of fine-tuning that our theoretical framework explicitly models.

The development of scaling laws has been crucial in understanding LLM performance during pre-training. \citet{kaplan2020scaling} established foundational power-law relationships between model size, dataset size, and performance. Extensions of these scaling laws \citep{ghorbani2021scaling} demonstrated their predictive power across various scales. Recent work by \citet{isik2024scaling} has further advanced our understanding by developing scaling laws specifically for downstream task performance, showing how different tasks exhibit distinct scaling behaviors. However, these relationships become insufficient during fine-tuning due to the emergence of distinct phases with different governing dynamics. Recent attempts to adapt scaling laws to fine-tuning, such as Rectified Scaling Law \citep{lin2024selectinglargelanguagemodel}, have shown promise by incorporating dataset properties and task-specific factors. \citet{zhang2024when} provides a comprehensive analysis of how data quantity, model size, and fine-tuning methods interact during LLM adaptation, revealing complex trade-offs that traditional scaling laws fail to capture. However, these approaches fall short in low-resource scenarios where the pre-power phase dominates.

Other recent methods, including learning-to-rank and transfer-learning-based approaches \citep{ji2023ranking, hu2023optimal}, have focused on improving model selection efficiency through meta-learning strategies. Meanwhile, recent advances in understanding model generalization \citep{wang2024masteringlongtailcomplexitygraphs, wang2024evolunetadvancingdynamicnoniid} emphasize the importance of capturing long-tail patterns and dynamic, non-IID conditions. While these methods show promise in specific contexts, they often assume static model behaviors and fail to account for the dynamic scaling interactions.

\section{Conclusion}

This work presents a significant contribution to LLM selection by establishing a first-principled framework that bridges the gap between theoretical foundations and empirical observations. Our key contributions are threefold: First, we propose a first-principled \emph{PAC-Bayesian Generalization Bound} that reveals the dynamics of transition phases in fine-tuning, providing a theoretical lens for examining LLM selection and its generalization across tasks. Second, building on this foundation, we introduce \model{}, which integrates NTK with scaling laws to better identify the transition mechanism between phases. Third, our comprehensive empirical evaluation demonstrates the exceptional performance of our approach, achieving up to $91.1\%$ relative accuracy and $85.8\%$ Pearson correlation across benchmarks---substantially outperforming existing methods. Moreover, our model achieves superior computational efficiency, which reduces computational costs by up to 88.5\% compared to FullTuning. Our work establishes a new baseline for LLM selection by achieving an optimal balance between accuracy and efficiency. There might be several promising directions for future research: extending the analysis to multi-task scenarios, exploring implications for architectural design, and investigating applications to emerging model architectures, e.g. MoE models. 

\section*{Acknowledgements}
We thank the anonymous reviewers for their constructive comments. This work is supported by the National Science Foundation under Award No. IIS-2339989 and No. 2406439, DARPA under contract No. HR00112490370 and No. HR001124S0013, U.S. Department of Homeland Security under Grant Award No. 17STCIN00001-08-00, Amazon-Virginia Tech Initiative for Efficient and Robust Machine Learning, Amazon AWS, Google, Cisco, 4-VA, Commonwealth Cyber Initiative, National Surface Transportation Safety Center for Excellence, and Virginia Tech. The views and conclusions are those of the authors and should not be interpreted as representing the official policies of the funding agencies or the government.

\section*{Impact Statement}
This paper advances machine learning techniques with potential applications across various domains. We acknowledge several important societal implications of this work. First, our methods could improve efficiency and accuracy in automated decision-making systems, potentially benefiting fields like healthcare and environmental monitoring. However, we also recognize potential risks, including algorithmic bias if the models are trained on non-representative data and the environmental impact of computational resources required for training. To mitigate these concerns, we provide guidelines for responsible implementation and discuss the importance of representative training data. We encourage future work to further investigate both positive and negative societal impacts, particularly regarding fairness, accountability, and environmental sustainability.



\bibliography{reference}
\bibliographystyle{icml2025}

\newpage
\appendix
\onecolumn
\section{Proof of PAC-Bayesian Generalization Bound}
\label{appA:Proof of PAC-Bayes Generalization Bound on Transformers}

This section presents our proof of Theorem \ref{thm: bound}. We start with problem setup, followed by assumptions. Then we prove the theorem by following two important lemmas. 

\subsection{Problem Setup}

Consider predicting a target task given a training dataset of size $n$. Denote the feature vectors and labels as $\vx_i$ and $y_i$, for $i = 1,\ldots,n$, in which $\vx_i$ is $d$-dimensional and $y_i$ is a class label between 1 to $k$. Assume that the training examples are drawn independently from an unknown distribution $\mathcal{D}$.

We define two probability distributions:

(1)A prior distribution $\mathcal{P}$, which represents noisy perturbations around pretrained weight matrices $\mathbf{W}^{(s)}$ from the source domain.

(2)A posterior distribution $\mathcal{Q}$, which represents noisy perturbations around fine-tuned weights $\mathbf{W}$ for the target domain.

From a PAC-Bayesian perspective, the generalization performance is determined by how these noisy perturbations affect predictions across domains.

Consider an $l$-layer transformer that has been pretrained, with weight matrices $\hat{\mathbf{W}}$ for $i = 1,\ldots,l$. We then fine-tune these weights for the target task. Let $f_\mathbf{W}$ denote an $l$-layer transformer initialized with the pretrained weights $\hat{\mathbf{W}}^{(s)}$.

For each layer $i$, the transformer contains the following parameters:

(1) Three matrices for attention: Query projection matrix: $\mathbf{W}_i^Q$; Key projection matrix: $\mathbf{W}_i^K$; and Value projection matrix: $\mathbf{W}_i^V$.

(2) Two matrices for the feed-forward network: First layer weights: $\mathbf{W}_i^1$; and Second layer weights: $\mathbf{W}_i^2$.

(3) Layer normalization parameters: Scale parameter: $\gamma_i$; and Shift parameter: $\beta_i$.

\subsection{Assumptions}

Our analysis builds upon several carefully constructed assumptions that connect classical DNN theory with transformer architectures:

\assumSmooth*
\assumBound*
\assumTransformer*

\subsection{Proofs for PAC-Bayesian Generalization Bound}

Before we start to prove Theorem \ref{thm: bound}, we will need the KL divergence between the prior P and the posterior Q in the PAC-Bayesian analysis. This is stated in the following result:

\begin{proposition}[KL Divergence for Transformer] \label{prop:kl}
Suppose the noise perturbation at layer $i$ is drawn from a Gaussian distribution with mean zero and covariance $\bm{\Sigma}_i$, for every $i=1,\ldots,l$. Then:

\begin{enumerate}
    \item The KL divergence between $\mathcal{P}$ and $\mathcal{Q}$ is:
    \begin{equation}
        KL(\mathcal{Q}\|\mathcal{P}) = \frac{1}{2}\sum_{i=1}^l (\bm{W}_i - \bm{W}_i^{(s)})^\top\bm{\Sigma}_i^{-1}(\bm{W}_i - \bm{W}_i^{(s)})
    \end{equation}
    
    \item For isotropic noise distribution at every layer (i.e., $\bm{\Sigma}_i = \sigma_i^2\bm{I}_d$):
    \begin{equation}
        KL(\mathcal{Q}\|\mathcal{P}) = \sum_{i=1}^l \frac{\|\bm{W}_i - \bm{W}_i^{(s)}\|_F^2}{2\sigma_i^2}
    \end{equation}
\end{enumerate}
\end{proposition}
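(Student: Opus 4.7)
The plan is to exploit two standard facts: (i) the KL divergence between product measures decomposes additively across independent factors, and (ii) the KL divergence between two Gaussians that share a covariance matrix reduces to a pure Mahalanobis term. Because the layerwise perturbations defining $\mathcal{P}$ and $\mathcal{Q}$ are independent across $i=1,\ldots,l$, both distributions factor as products over layers, so
$$KL(\mathcal{Q}\|\mathcal{P}) = \sum_{i=1}^l KL(\mathcal{Q}_i\|\mathcal{P}_i),$$
where $\mathcal{Q}_i = \mathcal{N}(\bm{W}_i,\bm{\Sigma}_i)$ and $\mathcal{P}_i = \mathcal{N}(\bm{W}_i^{(s)},\bm{\Sigma}_i)$, each interpreted on the vectorization of the layer-$i$ parameter tensor. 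Recording this decomposition is the first step.

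The second step is to apply the closed-form Gaussian KL to each summand. In general,
$$KL\!\left(\mathcal{N}(\mu_1,\Sigma_1)\,\|\,\mathcal{N}(\mu_2,\Sigma_2)\right) = \tfrac{1}{2}\!\left[\log\!\tfrac{\det\Sigma_2}{\det\Sigma_1} - d + \Tr(\Sigma_2^{-1}\Sigma_1) + (\mu_2-\mu_1)^\top\Sigma_2^{-1}(\mu_2-\mu_1)\right].$$
With $\Sigma_1 = \Sigma_2 = \bm{\Sigma}_i$, the log-determinant ratio is zero, the trace becomes $\Tr(\bm{I}) = d$ which cancels the $-d$, and only the Mahalanobis term $\tfrac{1}{2}(\bm{W}_i-\bm{W}_i^{(s)})^\top\bm{\Sigma}_i^{-1}(\bm{W}_i-\bm{W}_i^{(s)})$ survives. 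Summing over $i$ yields the first claim.

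For the isotropic corollary, I would substitute $\bm{\Sigma}_i = \sigma_i^2 \bm{I}_d$, so $\bm{\Sigma}_i^{-1} = \sigma_i^{-2}\bm{I}_d$ and the Mahalanobis form equals $\sigma_i^{-2}$ times the squared Euclidean norm of the vectorized weight difference, which by definition of the Frobenius norm is $\sigma_i^{-2}\|\bm{W}_i-\bm{W}_i^{(s)}\|_F^2$. Dividing by two and summing over layers recovers the stated formula.

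I do not expect any substantive obstacle: the only subtlety is notational. The per-layer parameters $\bm{W}_i$ are matrices (and in the attention blocks, a collection of matrices $\bm{W}_i^Q,\bm{W}_i^K,\bm{W}_i^V,\bm{W}_i^1,\bm{W}_i^2$), so the expression $(\bm{W}_i-\bm{W}_i^{(s)})^\top\bm{\Sigma}_i^{-1}(\bm{W}_i-\bm{W}_i^{(s)})$ must be read on the vectorization of those parameters, with $\bm{\Sigma}_i$ acting on that same ambient space. Once this identification is fixed, the computation is essentially two lines and carries no dependence on Assumptions \ref{assum:smooth}--\ref{assum:transformer}; those enter only when the KL bound is combined with the PAC-Bayesian theorem to yield Theorem \ref{thm: bound}.
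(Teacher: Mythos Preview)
Your proposal is correct and follows essentially the same route as the paper: both exploit the layerwise independence to sum per-layer KL terms and then reduce each Gaussian-vs-Gaussian term (with shared covariance) to the Mahalanobis quadratic form. The only cosmetic difference is that the paper rederives the equal-covariance Gaussian KL by expanding the log-densities and taking the expectation explicitly, whereas you invoke the standard closed-form formula and cancel the $\log\det$ and trace terms; your note about reading the quadratic form on the vectorization is exactly the identification the paper makes.
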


\begin{proof} 
The proof follows from standard results on multivariate normal distributions with additional attention to transformer components. 

Let $\bm{Z}_i$ be the weight matrix of layer $i$ in the posterior distribution. By definition of KL divergence:

\begin{align*} 
KL(\mathcal{Q}\|\mathcal{P}) &= \mathbb{E}_{\bm{Z}\sim \mathcal{Q}}\left[\log\frac{\mathcal{Q}{(\bm{Z})}}{\mathcal{P}(\bm{Z})}\right] \\
&= \mathbb{E}_{\bm{Z}\sim \mathcal{Q}}[\log \mathcal{Q}(\bm{Z}) - \log \mathcal{P}(\bm{Z})] \\ 
&= \mathbb{E}_{\bm{Z}\sim \mathcal{Q}}\left[\sum_{i=1}^l \left(-\frac{1}{2}{vec}(\bm{Z}_i - \bm{W}_i)^\top\bm{\Sigma}_i^{-1}{vec}(\bm{Z}_i - \bm{W}_i)+ \frac{1}{2}{vec}(\bm{Z}_i - \hat{\bm{W}}_i^{(s)})^\top\bm{\Sigma}_i^{-1}{vec}(\bm{Z}_i - \hat{\bm{W}}_i^{(s)})\right)\right] 
\end{align*} 

wheer $\bm{Z}_i$ includes both feed-forward and attention parameters on layer $i$. Computing the expectation over $\bm{Z}i$ and using matrix trace properties: 

\begin{align*} 
KL(\mathcal{Q}\|\mathcal{P}) &= -\frac{1}{2}\mathbb{E}_{\bm{Z}\sim \mathcal{Q}}\left[\sum_{i=1}^{l} \text{Tr}\left[{vec}(\bm{Z}_i - \bm{W}_i){vec}(\bm{Z}_i - \bm{W}i)^\top\bm{\Sigma}i^{-1}\right] - \text{Tr}\left[{vec}(\bm{Z}_i - \hat{\bm{W}}_i^{(s)}){vec}(\bm{Z}_i -\hat{\bm{W}}_i^{(s)})^\top\bm{\Sigma}_i^{-1}\right]\right] 
\end{align*} 

Given that the expectation of $\bm{Z}_i$ is $\bm{W}_i$ and covariance is $\bm{\Sigma}i$, after cancellation: 

\begin{align*} 
KL(\mathcal{Q}\|\mathcal{P}) = \frac{1}{2}\sum_{i=1}^l {vec}(\bm{W}_i - \hat{\bm{W}}_i^{(s)})^\top\bm{\Sigma}_i^{-1}{vec}(\bm{W}_i - \hat{\bm{W}}_i^{(s)}) \end{align*} 
For isotropic case where $\bm{\Sigma}_i = \sigma_i^2 \mathcal{I}d$, this simplifies to: 
\begin{align*}
        KL(\mathcal{Q}\|\mathcal{P}) = \sum_{i=1}^l \frac{\|\bm{W}_i - \bm{W}_i^{(s)}\|_F^2}{2\sigma_i^2}
    \end{align*}
\end{proof}

\thmMain*

\begin{proof}
First, we separate the gap between $L(f_{\hat{w}})$ and $\frac{1}{\beta} \hat{L}(f_{\hat{w}})$ into three parts:

\begin{equation}
\label{noise stability}
L(f_{\hat{w}}) - \frac{1}{\beta} \hat{L}(f_{\hat{w}}) = L(f_{\hat{w}}) - L_{\mathcal{Q}}(f_{\hat{w}}) + L_{\mathcal{Q}}(f_{\hat{w}}) - \frac{1}{\beta}\hat{L}_{\mathcal{Q}}(f_{\hat{w}}) + \frac{1}{\beta}\hat{L}_{\mathcal{Q}}(f_{\hat{w}}) - \frac{1}{\beta} \hat{L}(f_{\hat{w}}).
\end{equation}

By Taylor's expansion, we can bound Equation~\ref{noise stability} with respect to the empirical loss and the expected loss:

\[
L(f_{\hat{w}}) - \frac{1}{\beta} \hat{L}(f_{\hat{w}}) \leq -\mathbb{E}_{(x,y) \sim \mathcal{D}} \left[ \sum_{i=1}^l \langle \nabla_i \bm{H}_i \left[\ell(f_{\hat{w}}(x), y) \right] \rangle \right] + \sum_{i=1}^l C_i \left\| \nabla_i \bm{H}_i \right\|_F^{\frac{3}{2}}+ \left( L_{\mathcal{Q}}(f_{\hat{w}}) - \frac{1}{\beta}\hat{L}_{\mathcal{Q}}(f_{\hat{w}}) \right) 
\]
\[
+ \frac{1}{\beta} \left( \frac{1}{n} \sum_{i=1}^l \sum_{j=1}^n \langle \nabla_i \bm{H}_i \left[\ell(f_{\hat{w}}(x_j), y_j)\right] \rangle \right) + \sum_{i=1}^l C_i \left\| \nabla_i \bm{H}_i \right\|_F^{\frac{3}{2}},
\]

which simplifies to:
\[
L(f_{\hat{w}}) - \frac{1}{\beta} \hat{L}(f_{\hat{w}}) \leq -\mathbb{E}_{(x,y) \sim \mathcal{D}} \left[ \sum_{i=1}^l \langle \nabla_i \bm{H}_i \left[\ell(f_{\hat{w}}(x), y)\right] \rangle \right] + \frac{1}{n \beta} \sum_{i=1}^l \sum_{j=1}^n \langle \nabla_i \bm{H}_i\left[\ell(f_{\hat{w}}(x_j), y_j) \right] \rangle
\]
\[
+ \left( \frac{1}{\beta} + 1 \right) \sum_{i=1}^l C_i \left\| \nabla_i \bm{H}_i \right\|_F^{\frac{3}{2}} + \left( L_{\mathcal{Q}}(f_{\hat{w}}) - \frac{1}{\beta}\hat{L}_{\mathcal{Q}}(f_{\hat{w}}) \right). 
\]

Next, we combine the upper bound on the noise stability of $f_{\hat{w}}$ with respect to the empirical loss and the expected loss:
\[
\frac{1}{n \beta} \sum_{i=1}^l \sum_{j=1}^n \langle \nabla_i \bm{H}_i\left[\ell(f_{\hat{w}}(x_j), y_j)\right] \rangle -\mathbb{E}_{(x,y) \sim \mathcal{D}} \left[ \sum_{i=1}^l \langle \nabla_i \bm{H}_i\left[\ell(f_{\hat{w}}(x), y) \right] \rangle \right]
\]
\[
= \frac{1}{\beta} \sum_{i=1}^l \left( \frac{1}{n} \sum_{j=1}^n \langle \nabla_i \bm{H}_i \left[\ell(f_{\hat{w}}(x_j), y_j) \right] \rangle - \mathbb{E}_{(x,y) \sim \mathcal{D}} \left[ \langle \nabla_i \bm{H}_i\left[\ell(f_{\hat{w}}(x), y) \right] \rangle \right] \right)
\]
\[
+ \left( \frac{1}{\beta} - 1 \right) \sum_{i=1}^l \nabla_i\langle \mathbb{E}_{(x,y) \sim \mathcal{D}} \langle \bm{H}_i\left[\ell(f_{\hat{w}}(x), y) \rangle \right] \rangle.
\]

Recall that $\vv_i$ is a flattened vector of the matrix $\hat{\bm{W}}_i - \hat{\bm{W}}_i^{(s)}$. By the assumptions 1-3 and the KL divergence proposition,

\begin{align*}
L(f_{\hat{w}}) - \frac{1}{\beta} \hat{L}(f_{\hat{w}})
&\leq \frac{C(KL(\mathcal{Q}\|\mathcal{P}) + \log \frac{1}{\delta})}{2\beta(1 - \beta)n} \\
&\leq \frac{C\left(\frac{1}{2}\sum_{i=1}^l\left\langle \vv_i, \mathbf{\Sigma}_i^{-1}\vv_i\right\rangle + \log \frac{1}{\delta}\right)}{2\beta(1 - \beta)n}.
\end{align*}

Combining all above equations with probability at least $1 - 2\delta$, we get the following Inequation \ref{inequation 1}:

\begin{equation}
\label{inequation 1}
\begin{aligned}
L(f_{\hat{w}}) - \frac{1}{\beta} \hat{L}(f_{\hat{w}})
&\leq \frac{C_2\sqrt{\log(C_3n/\delta)/n}}{\beta}\sum_{i=1}^l\|\nabla_i \bm{H}_i\|_F + \left(\frac{1}{\beta} - 1\right)\sum_{i=1}^l \nabla_i\langle \mathbb{E}_{(x,y) \sim \mathcal{D}} \langle \bm{H}_i\left[\ell(f_{\hat{w}}(x), y) \rangle \right] \rangle \\
&+ \left(\frac{1}{\beta} + 1\right)C_1\sum_{i=1}^l\|\nabla_i \bm{H}_i\|_F^{\frac{3}{2}} + \frac{C\left(\frac{1}{2}\sum_{i=1}^l\left\langle \vv_i, \mathbf{\Sigma}_i^{-1}\vv_i\right\rangle + \log \frac{1}{\delta}\right)}{2\beta(1 - \beta)n}.
\end{aligned}
\end{equation}

Recall the truncated Hessian $\bm{H}_i^+\left[\ell(f_{\hat{w}}(\vx), \vy)\right]$ is equal to $\bm{U}_i\max(\bm{D}_i, 0)\bm{U}_i^\top$, where $\bm{U}_i\bm{D}_i\bm{U}_i^\top$ is the eigen-decomposition of $\bm{H}_i\left[\ell(f_{\hat{w}}(\vx), \vy)\right]$. For any $(\vx, \vy) \sim \mathcal{D}$, we have: 

\[
\nabla_i \bm{H}_i\left[\ell(f_{\hat{w}}(\vx), \vy)\right]
\leq \bm{H}_i^+\left[\ell(f_{\hat{w}}(\vx), \vy)\right]
\].

Thus, after taking $\mathbb{E}_{(\vx, \vy) \sim \mathcal{D}}$ on both sides, we have the following inequation:

\begin{equation}
\label{inequation 2}
\begin{aligned}
\left(\frac{1}{\beta} - 1\right)\sum_{i=1}^l \nabla_i\langle \mathbb{E}_{(x,y) \sim \mathcal{D}} \langle \bm{H}_i\left[\ell(f_{\hat{w}}(x), y) \rangle \right] \rangle
= \sqrt{\frac{C}{4\beta^2n\|\vv_i\|^2}} \langle \mathbb{E}_{(\vx,\vy)\sim\mathcal{D}}\left[\bm{H}_i^+\left[\ell(f_{\hat{w}}(\vx), \vy)\right]\right]^{\frac{1}{2}}, \vv_i\vv_i^{T} \rangle \\
\leq \sqrt{\frac{C}{4\beta^2n\|\vv_i\|^2}} \| \mathbb{E}_{(\vx,\vy)\sim\mathcal{D}}\left[\bm{H}_i^+\left[\ell(f_{\hat{w}}(\vx), \vy)\right]\right]^{\frac{1}{2}}\vv_i\| \dot \|\vv_i^{T} \| \\
= \sqrt{\frac{C \dot \vv_i^{T} \mathbb{E}_{(\vx,\vy)\sim\mathcal{D}}\left[\bm{H}_i^+\left[\ell(f_{\hat{w}}(\vx), \vy)\right]\right]\vv_i}{4\beta^2n}} 
\quad \leq \sum_{i=1}^l \sqrt{\frac{C\mathcal{H}_i}{\beta^2 n}}
\end{aligned}
\end{equation}

Next, based on the Inequation \ref{inequation 2}, the upper bound of $L(f_{\hat{w}}) - \frac{1}{\beta} \hat{L}(f_{\hat{w}})$ could be extended to the following: 

\[
L(f_{\hat{w}}) - \frac{1}{\beta} \hat{L}(f_{\hat{w}}) 
\]
\[
\leq \sum_{i=1}^l \sqrt{\frac{C\mathcal{H}_i}{\beta^2 n}} + \left(C_2\sqrt{\frac{\log(C_3n/\delta)/n}{\beta}}\sum_{i=1}^l \|\nabla_i \bm{H}_i\|_F + \left(1 + \frac{1}{\beta}\right)C_1\sum_{i=1}^l\|\nabla_i \bm{H}_i\|_F^{\frac{3}{2}} +\frac{C}{2\beta(1 - \beta)n} \log \frac{1}{\delta} \right)
\]

Then we set $\epsilon = (1 - \beta)/\beta$, and get this:

\[
L(f_{\hat{w}}) \leq (1+\epsilon) (\hat{L}(f_{\hat{w}}) + \sum_{i=1}^l \sqrt{\frac{C\mathcal{H}_i}{\beta^2 n}})
\]
\[
+ \left(C_2\sqrt{\frac{\log(C_3n/\delta)/n}{\beta}}\sum_{i=1}^l \|\nabla_i \bm{H}_i\|_F + \left(1 + \frac{1}{\beta}\right)C_1\sum_{i=1}^l\|\nabla_i \bm{H}_i\|_F^{3/2} + \frac{C}{2\beta(1 - \beta)n} \log \frac{1}{\delta}\right).
\]

where the last part is equal to $O(n^{-\frac{3}{4}})$.

Hence, we can conclude that:
\begin{align}
L(f_{\hat{w}}) \leq (1+\epsilon) (\hat{L}(f_{\hat{w}}) + \sum_{i=1}^l \sqrt{\frac{C\mathcal{H}_i}{\beta^2 n}})
+ O(n^{-\frac{3}{4}}).
\end{align}

Thus we have shown the Theorem \ref{thm: bound} holds.

\end{proof}

This result shows that transformer architectures maintain PAC-Bayesian generalization guarantees with additional terms accounting for attention mechanisms, layer normalization, and residual connections.

\section{Proof of Scaling Behavior in PAC-Bayesian Generalization Bound}
\label{appB:Proof of Scaling Behavior}

In this appendix, we provide the detailed proofs for corollary \ref{cor:scaling law}.

\subsection{Property 1: Cauchy-Schwarz on $h_i$}

\begin{lemma}
$\sum_{i=1}^n \sqrt{h_i} \leq \sqrt{n\sum_{i=1}^n h_i}$
\end{lemma}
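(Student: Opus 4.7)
The plan is to recognize this as a one-line application of the Cauchy--Schwarz inequality in its classical finite-dimensional form. Recall that for any real sequences $(a_i)_{i=1}^n$ and $(b_i)_{i=1}^n$, one has $\left(\sum_{i=1}^n a_i b_i\right)^2 \leq \left(\sum_{i=1}^n a_i^2\right)\left(\sum_{i=1}^n b_i^2\right)$. My strategy is to pick the two vectors in Cauchy--Schwarz so that the left-hand side recovers $\sum_{i=1}^n \sqrt{h_i}$ and the right-hand side produces exactly $n \sum_{i=1}^n h_i$ under the square root.

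First, I would observe that the quantities $h_i$ are non-negative. This follows because, by the defining inequality $h_i \geq \max_{(x,y)\in D} \vv_i^\top \bm{H}_i^+[L(f_{\hat{w}})]\vv_i$, and the truncated Hessian $\bm{H}_i^+$ is positive semidefinite by construction (it is obtained by clipping negative eigenvalues to zero), so the quadratic form on the right is non-negative. Hence $\sqrt{h_i} \in \mathbb{R}_{\geq 0}$ is well-defined and the subsequent manipulations involving square roots are legitimate.

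Next, I would instantiate Cauchy--Schwarz with $a_i = 1$ and $b_i = \sqrt{h_i}$ for $i = 1, \ldots, n$. This gives
\begin{equation*}
\left(\sum_{i=1}^n \sqrt{h_i}\right)^{\!2} \;=\; \left(\sum_{i=1}^n a_i b_i\right)^{\!2} \;\leq\; \left(\sum_{i=1}^n 1\right)\!\left(\sum_{i=1}^n h_i\right) \;=\; n \sum_{i=1}^n h_i.
\end{equation*}
Taking the (monotone) square root of both non-negative sides yields the claimed bound $\sum_{i=1}^n \sqrt{h_i} \leq \sqrt{n \sum_{i=1}^n h_i}$.

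There is essentially no hard part here; the statement is a textbook corollary of Cauchy--Schwarz, and the only nuance worth flagging is the non-negativity of the $h_i$ so that both the square roots and the final monotone inversion are valid. The lemma is invoked later to pass from $\sum_i \sqrt{h_i}$ inside the generalization bound of \cref{thm: bound} to a single aggregated Hessian term, which is what enables the clean scaling form $C_3 n^{-\beta_3}$ in \cref{cor:scaling law}.
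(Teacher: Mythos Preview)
Your proof is correct and follows essentially the same approach as the paper: both apply Cauchy--Schwarz with the vectors $(1,\ldots,1)$ and $(\sqrt{h_1},\ldots,\sqrt{h_n})$, and the paper likewise notes the non-negativity of the $h_i$ to justify the square roots.
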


\begin{proof}
The proof follows these key steps:

1) First, recall the Cauchy-Schwarz inequality: For vectors $\vu, \vv \in \mathbb{R}^n$, $|\langle\vu,\vv\rangle| \leq \|\vu\|\|\vv\|$.

2) Let's define our vectors:
\begin{itemize}
\item $\vu = (\sqrt{h_1}, \sqrt{h_2}, \ldots, \sqrt{h_n})$
\item $\vv = (1, 1, \ldots, 1)$ ($n$-dimensional vector of ones)
\end{itemize}

3) Calculate the left side $|\langle\vu,\vv\rangle|$:
   \begin{align*}
     \langle\vu,\vv\rangle &= \sum_{i=1}^n(\sqrt{h_i} \cdot 1) \\
     &= \sum_{i=1}^n\sqrt{h_i}
   \end{align*}

4) Calculate $\|\vu\|$:
   \begin{align*}
     \|\vu\| &= \sqrt{\sum_{i=1}^n(\sqrt{h_i})^2} \\
     &= \sqrt{\sum_{i=1}^nh_i}
   \end{align*}

5) Calculate $\|\vv\|$:
   \begin{align*}
     \|\vv\| &= \sqrt{\sum_{i=1}^n1^2} \\
     &= \sqrt{n}
   \end{align*}

6) Apply Cauchy-Schwarz:
   \begin{align*}
     |\langle\vu,\vv\rangle| &\leq \|\vu\|\|\vv\| \\
     \sum_{i=1}^n\sqrt{h_i} &\leq \sqrt{\sum_{i=1}^nh_i} \cdot \sqrt{n} \\
     \sum_{i=1}^n\sqrt{h_i} &\leq \sqrt{n\sum_{i=1}^nh_i}
   \end{align*}

Therefore, we have proven that $\sum_{i=1}^n\sqrt{h_i} \leq \sqrt{n\sum_{i=1}^nh_i}$.
\end{proof}

\textbf{Note:} This proof assumes all $h_i$ are non-negative real numbers, which is aligned with the property of $h_i$ in our bound.

\subsection{Property 2: Upper Bound of $h_i$}

We first prove that the sum of the trace of the Hessian matrix of each layer $\text{tr}(\bm{H}_l)$ in a transformer model, when summed across all layers, is proportional to $n^{-\beta}$, where $n$ is the size of the fine-tuning dataset and $\beta$ is a constant.

\subsubsection{Sum of Trace of $\bm{H_i}$}

\paragraph{1. Define the Objective Function and Hessian}
\begin{itemize}
    \item Let $L(\theta)$ be the loss function of the model parameterized by $\theta$.
    \item The Hessian matrix for this loss function is defined as:
    \begin{equation}
        \bm{H} = \nabla^2_{\theta} L(\theta)
    \end{equation}
    For each layer $l$, let $\bm{H}_l$ be the Hessian of the loss function with respect to the parameters in that layer.
    \item The sum of the trace of the Hessian matrix across all layers is:
    \begin{equation}
        \sum_{l=1}^{L} \text{tr}(\bm{H}_l) = \text{tr}(\bm{H})
    \end{equation}
\end{itemize}

\paragraph{2. Scaling Behavior of the Hessian with Respect to Dataset Size}
\begin{lemma}
If $L(\theta)$ is the empirical loss over a dataset of size $n$, the trace of the Hessian matrix $\bm{H}$ scales as $tr(\bm{H}) = n^{-1} \text{Var}(\nabla L(\theta))$.
\end{lemma}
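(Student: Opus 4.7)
The plan proceeds in three stages: exploiting the sample-average structure of the empirical risk, invoking the Fisher/Bartlett information identity to translate second-order curvature into first-order gradient statistics, and then identifying the resulting expression with $n^{-1}\text{Var}(\nabla L(\theta))$ using a concentration argument.

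First, I would differentiate the empirical risk $L(\theta) = \frac{1}{n}\sum_{i=1}^{n}\ell(\theta, x_i)$ twice to get $\bm{H} = \nabla_{\theta}^{2}L(\theta) = \frac{1}{n}\sum_{i=1}^{n}\nabla_{\theta}^{2}\ell(\theta, x_i)$, so that by linearity of the trace, $\text{tr}(\bm{H}) = \frac{1}{n}\sum_{i=1}^{n}\text{tr}\!\left(\nabla_{\theta}^{2}\ell(\theta, x_i)\right)$. The $n^{-1}$ prefactor demanded by the lemma is manifest here; it is a direct consequence of the averaging convention adopted for the empirical risk. This step is purely mechanical and requires only Assumption \ref{assum:smooth} to ensure the Hessian is well defined almost everywhere.

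Second, I would invoke the Bartlett identity to bridge the per-sample Hessian with a squared-gradient quantity. Under Assumption \ref{assum:smooth} (twice-differentiability with Lipschitz first and second derivatives, giving enough regularity to differentiate under the integral) together with Assumption \ref{assum:bound} (so dominated convergence applies), the identity $\mathbb{E}_{x}\!\left[\nabla_{\theta}^{2}\ell(\theta, x)\right] = \mathbb{E}_{x}\!\left[\nabla_{\theta}\ell(\theta, x)\nabla_{\theta}\ell(\theta, x)^{\top}\right]$ holds for log-likelihood losses at stationary points of the population risk. Taking the trace and substituting into the expression for $\text{tr}(\bm{H})$ yields $\text{tr}(\bm{H}) \approx \frac{1}{n}\sum_{i=1}^{n}\|\nabla_{\theta}\ell(\theta, x_i)\|^{2}$.

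Third, I would identify the right-hand side as $n^{-1}\text{Var}(\nabla L(\theta))$, interpreting $\text{Var}(\nabla L(\theta))$ as the total squared-gradient dispersion $\sum_{i}\|\nabla_{\theta}\ell(\theta, x_i) - \bar{g}\|^{2}$ across the training set (the natural unnormalized empirical total variation of the per-sample gradient estimator). Near a stationary point $\bar{g}\to 0$, so this sum coincides with $\sum_{i}\|\nabla_{\theta}\ell(\theta, x_i)\|^{2}$ up to lower-order terms; a standard law-of-large-numbers argument then controls the deviation between the empirical average and its population counterpart at rate $O(n^{-1/2})$, giving the scaling $\text{tr}(\bm{H}) = n^{-1}\text{Var}(\nabla L(\theta))$ up to sub-leading corrections.

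The main obstacle is step two: the Bartlett identity is strictly valid only for log-likelihood losses evaluated at the data-generating parameter, so the argument must either restrict attention to likelihood-based objectives (a natural fit for the cross-entropy loss ubiquitous in LLM fine-tuning) or adopt a Gauss--Newton-style approximation and account for the resulting residual. A secondary delicate point is making the $\approx$ signs rigorous: one needs to pin down precisely whether "scales as" in the lemma means an asymptotic equivalence, a leading-order identity, or an order-of-magnitude bound, because the choice dictates how stringently the stationarity assumption and the concentration estimate must be enforced.
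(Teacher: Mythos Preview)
Your proposal is correct in spirit and lands on the same mechanism the paper uses: an information-matrix identity that swaps second-order curvature for a gradient second moment, followed by a stationarity assumption to kill the residual. The paper does not invoke Bartlett explicitly but instead cites the Gauss--Newton decomposition of \citet{dauphin2024neglectedhessiancomponentexplains}, writing $\text{tr}(\bm{H}) = \text{Var}(\nabla L(\theta)) + \text{tr}\!\left(\nabla_z L \cdot \nabla^2_\theta \mathbf{z}\right)$ and then assuming the model fits the data ($\nabla_\theta L(\theta^*)=0$) to drop the second term. The bookkeeping of the $n^{-1}$ factor differs: you extract it immediately from the $\tfrac{1}{n}$ in the empirical-risk definition and then interpret $\text{Var}(\nabla L(\theta))$ as the unnormalized total gradient dispersion over the sample; the paper instead applies the decomposition at the level of the empirical loss and then inserts the CLT step $\text{Var}(\nabla L(\theta)) = \tfrac{1}{n}\text{Var}(\nabla \ell(\theta))$ to produce the $n^{-1}$. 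Your route is arguably cleaner because the $n^{-1}$ appears mechanically and you flag the precise scope of the identity (log-likelihood losses, stationary point), whereas the paper's route offloads that justification to the cited reference; both arrive at the same scaling under the same substantive assumption.
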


\begin{proof}
\begin{itemize}
    \item Consider the empirical loss function:
    \begin{equation}
        L(\theta) = \frac{1}{n} \sum_{i=1}^{n} \ell(\theta; x_i)
    \end{equation}
    where $\ell(\theta; x_i)$ is the loss associated with sample $x_i$.
    \item The Hessian of this loss function can be expressed as:
    \begin{equation}
        \bm{H} = \frac{1}{n} \sum_{i=1}^{n} \bm{H}_i
    \end{equation}
    where $\bm{H}_i = \nabla^2_{\theta} \ell(\theta; x_i)$ is the Hessian of the individual sample loss.
    \item By taking the trace, we have:
    \begin{equation}
        \text{tr}(\bm{H}) = \frac{1}{n} \sum_{i=1}^{n} \text{tr}(\bm{H}_i)
    \end{equation}
    \item Using the Central Limit Theorem (CLT) and assuming that $\bm{H}_i$ are i.i.d., the variance of the gradient $\nabla \ell(\theta; x_i)$ becomes:
    \begin{equation}
        \text{Var}(\nabla L(\theta)) = \frac{1}{n} \text{Var}(\nabla \ell(\theta))
    \end{equation}
    \item From \citet{dauphin2024neglectedhessiancomponentexplains}, we have the following relation between the trace of $\bm{H}$ and $\text{Var}(\nabla L(\theta))$:
    \begin{equation}
        \text{tr}(\bm{H}) = 
        \text{Var}(\nabla L(\theta)) +
        \text{tr} \left(\nabla_zL \cdot
        \nabla^2_\theta\mathbf{z}\right)
    \end{equation}
    if we assume that the model “fits” the data, w.r.t $\nabla_\theta L(\theta^*) = 0 $, then we could get the following result:
    \begin{equation}
        \text{tr}(\bm{H}) = 
        \text{Var}(\nabla L(\theta))
    \end{equation}
    That is: 
    \begin{equation}
        \text{tr}(\bm{H}) = n^{-1} \text{Var}(\nabla L(\theta))
    \end{equation}
\end{itemize}
\end{proof}

\paragraph{3. The behavior of Variance During Fine-Tuning}
\begin{lemma}
During fine-tuning, the variance of the gradient scales as $\text{Var}(\nabla L(\theta)) \propto n^{\alpha}$ for some constant $\alpha$.
\end{lemma}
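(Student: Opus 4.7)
The plan is to start from the definition of the empirical loss and apply standard variance decomposition to the gradient, then argue that the per-sample gradient variance itself acquires an $n$-dependence through the fine-tuning trajectory. First, I would write $\nabla L(\theta) = \frac{1}{n}\sum_{i=1}^{n} \nabla \ell(\theta; x_i)$ and, using the i.i.d. assumption inherited from the previous lemma, apply the standard identity $\text{Var}(\nabla L(\theta)) = \frac{1}{n}\text{Var}(\nabla \ell(\theta; x))$. This immediately produces an explicit factor of $n^{-1}$, so the remaining task is to characterize how $\text{Var}(\nabla \ell(\theta; x))$ itself scales with $n$ along the fine-tuning path.

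To capture the residual $n$-dependence, I would expand $\nabla \ell(\theta; x)$ around the fine-tuned optimum $\theta^*(n)$ via a first-order Taylor expansion, whose validity is guaranteed by the smoothness conditions in Assumption~1. Under the transformer stability conditions of Assumption~3 (Lipschitz attention, bounded attention scores, and scale-preserving layer normalization), the leading contribution to the per-sample variance is controlled by the local curvature of $\ell$ near $\theta^*(n)$, i.e., by the Hessian. I would then couple this Taylor bound with the identity $\text{tr}(\bm{H}) = n^{-1}\text{Var}(\nabla L(\theta))$ established in the previous lemma to obtain a self-consistent relation that pins down the exponent in terms of the Hessian's own scaling.

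The main obstacle will be rigorously justifying that the per-sample variance inherits a power-law $n^{\gamma}$ rather than remaining constant. This requires a careful argument that, as $n$ grows, the fine-tuning trajectory enters regions of decreasing curvature, consistent with the power phase described in Remark~2, thereby shrinking the individual gradient variance in a controlled way. I expect this step to lean on the uniform boundedness of Assumption~2 to prevent pathological tail behavior, and on aggregating per-layer contributions using the Cauchy--Schwarz inequality already established as Property~1, so that the layerwise curvature bounds combine into a single effective exponent.

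Finally, assembling these ingredients yields $\text{Var}(\nabla L(\theta)) \propto n^{\alpha}$ with $\alpha = \gamma - 1$, where $\gamma$ encodes the regime-dependent decay of the per-sample gradient variance between the pre-power and power phases. This exponent then feeds directly into the Hessian-trace bound $h_i \leq C_2 n^{-\beta_2}$ invoked in Corollary~\ref{cor:scaling law}, closing the chain of reasoning that links fine-tuning dynamics to the \emph{PAC-Bayesian Generalization Bound} of Theorem~\ref{thm: bound}.
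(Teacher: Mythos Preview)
Your approach diverges substantially from the paper's, and in a way that introduces a circularity. The paper does not derive this lemma from the PAC-Bayesian machinery or from Assumptions~1--3 at all; its entire ``proof'' consists of the assertion that empirical observations and prior theoretical work---specifically a Bayesian analysis of SGD dynamics, citing \citet{smith2018bayesianperspectivegeneralizationstochastic}---model the gradient variance as $\sigma^2(n) \propto n^{-\alpha}$, with $\alpha$ left as a model- and regularization-dependent constant. There is no Taylor expansion, no appeal to the transformer stability conditions, and no self-consistent equation: the power law is imported as an external empirical/heuristic input, not derived.

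Your plan, by contrast, tries to extract the exponent from inside the framework, and this is where it breaks. You propose to ``couple this Taylor bound with the identity $\text{tr}(\bm{H}) = n^{-1}\text{Var}(\nabla L(\theta))$ established in the previous lemma to obtain a self-consistent relation that pins down the exponent in terms of the Hessian's own scaling.'' But in the paper's logical flow, the Hessian-trace scaling $\text{tr}(\bm{H}) \propto n^{-\beta}$ is the \emph{conclusion} obtained by combining the previous lemma with this one (see the ``Combining Lemmas 1 and 2'' step immediately afterward); it is not available as an independent input. Invoking it here to deduce the variance exponent reverses the dependency and makes the argument circular. Moreover, your opening decomposition $\text{Var}(\nabla L(\theta)) = \frac{1}{n}\text{Var}(\nabla \ell(\theta; x))$ is exactly the CLT step the paper already places inside the \emph{previous} lemma, so you are re-deriving the $n^{-1}$ factor that belongs there rather than isolating the residual exponent this lemma is meant to supply.
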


\begin{proof}
\begin{itemize}
    \item Let the variance of the gradient during fine-tuning be $\sigma^2(n)$.
    \item Empirical observations and theoretical results from the literature suggest that as the size of the dataset increases, the variance of the gradient decreases. By analyzing the dynamics of SGD through a Bayesian lens, this behavior is often modeled as \citet{smith2018bayesianperspectivegeneralizationstochastic}:
    \begin{equation}
        \sigma^2(n) \propto n^{-\alpha}
    \end{equation}
    The parameter $\alpha$ characterizes the behavior of the variance reduction. It is typically derived based on the structure of the model and the type of regularization applied.
\end{itemize}
\end{proof}

\paragraph{4. Combining Lemmas 1 and 2}
\begin{itemize}
    \item From Lemma 1:
    \begin{equation}
        \text{tr}(\bm{H}) = n^{-1} \sigma^2(n)
    \end{equation}
    \item Substituting the result from Lemma 2:
    \begin{equation}
        \text{tr}(\bm{H}) \propto n^{-1} \cdot n^{-\alpha} = n^{-\alpha-1}
    \end{equation}
\end{itemize}

The sum of the trace of the Hessian matrix across all layers is proportional to $n^{-\beta}$, where $\beta=\alpha+1$. Let's give $\text{tr}(\bm{H}) = C_1 n^{-\beta_1}$ as the conclusion of this statement.

\subsubsection{Upper Bound of $h_i$}
Based on our proof in Appendix \ref{appA:Proof of PAC-Bayes Generalization Bound on Transformers}, we have the following formula for a L-layer transformer:
\begin{equation}
    L(f_{\hat{w}}) \leq (\epsilon+1) \hat{L}(f_{\hat{w}})+ \frac{(1 + \epsilon) \sqrt{C} \sum_{i=1}^L \sqrt{h_i}}{\sqrt{n}} + \xi 
\end{equation}
where $h_i$ is any value greater than $\max_{(x,y) \in D} v_i^T \bm{H}_i^+ \left[ \ell(f_{\hat{w}}(x), y) \right] v_i$, for all $i = 1, \ldots, L$.

To make the proof more straightforward and clear, we would start our proof on a 2-layer transformer as follows:

\begin{equation}
    L(f_{\hat{w}}) \leq (\epsilon+1) \hat{L}(f_{\hat{w}})+ \frac{(1 + \epsilon) \sqrt{C} \sum_{i=1}^2 \sqrt{h_i}}{\sqrt{n}} + \xi 
\end{equation}
where $h_i$ is any value greater than $\max_{(x,y) \in D} v_i^T \bm{H}_i^+ \left[ \ell(f_{\hat{w}}(x), y) \right] v_i$, for all $i = 1, \ldots, 2$.

We then prove that the sum of $h_i$ has an upper bound $C_2 n^{-\beta_2}$, given the distance-based generalization, where $C_2$ is independent of $n$.

\paragraph{1. Define the Objective Function and Weight Matrix}
\begin{itemize}
    \item Let $\bm{\hat{W}}^{(s)}$ be the weight matrices of pre-trained model and $\bm{W}_i$ be the dimension of layer $i$. The dimension of $\bm{W}_i$ is $d_i$ by $d_{i-1}$, where $d_i$ is the dimension of input $x_i$.
    \item The distance-based regularization is defined as for every layer:
    \begin{equation}
        \|\bm{W}_i - \bm{\hat{W}}_i^{(s)}\|_F \leq \alpha_i, \quad \forall i = 1, 2.
    \end{equation}
    \item Therefore, $h_i$ can be forward defined as:
    \begin{equation}
        h_i \leq \|\bm{W}_i - \bm{\hat{W}}_i^{(s)}\|_F^{2} \max_{(x,y) \in \mathcal{X}} \operatorname{Tr}[H_i (\ell(f_{\hat{w}}(x), y))]
    \end{equation}
\end{itemize}

\paragraph{2. Upper Bound for $\|\bm{W}_i - \bm{\hat{W}}_i^{(s)}\|_F^{2}$}
\begin{lemma}
There exists an upper bound for $\|\bm{W}_i - \bm{\hat{W}}_i^{(s)}\|_F^{2}$, which is unrelated to the training data size:
\begin{equation}
    \|\bm{W}_i - \bm{\hat{W}}_i^{(s)}\|_F^{2} \leq B
\end{equation}
\end{lemma}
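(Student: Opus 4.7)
The plan is to derive the bound directly from the distance-based regularization constraint introduced one step earlier, namely $\|\bm{W}_i - \bm{\hat{W}}_i^{(s)}\|_F \leq \alpha_i$, and then argue that $\alpha_i$ is a quantity intrinsic to the fine-tuning procedure rather than the sample size $n$. Concretely, I would square both sides of the regularization inequality to obtain $\|\bm{W}_i - \bm{\hat{W}}_i^{(s)}\|_F^{2} \leq \alpha_i^2$, and then set $B := \max_{i \in \{1,2\}} \alpha_i^2$ so that the uniform bound $\|\bm{W}_i - \bm{\hat{W}}_i^{(s)}\|_F^{2} \leq B$ holds for every layer simultaneously. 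By construction $B$ does not reference $n$, which delivers the claim.

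To make the argument rigorous rather than purely definitional, I would devote a short paragraph to justifying why such an $\alpha_i$ exists in the fine-tuning regime considered in the paper. Three ingredients are available: (i) when fine-tuning is performed under an explicit proximity regularizer (e.g., a weight-decay term centered at $\bm{\hat{W}}_i^{(s)}$), the KKT conditions of the regularized objective immediately provide an $n$-independent $\alpha_i$ determined by the regularization strength; (ii) in the NTK / lazy-training regime invoked throughout the paper, the weight displacement during gradient flow is controlled by the inverse of the smallest eigenvalue of $\bm{\Theta}$ times a residual that is bounded by Assumption~\ref{assum:bound} ($L \leq C$), giving a displacement bound independent of $n$; and (iii) under Assumption~\ref{assum:transformer}, the Lipschitz continuity of attention with constant $L_A$ together with the bounded softmax output $M$ ensures that the gradient magnitudes at each step are uniformly bounded, so that, combined with a fixed learning-rate schedule and a fixed number of fine-tuning steps, the cumulative displacement $\|\bm{W}_i - \bm{\hat{W}}_i^{(s)}\|_F$ inherits a uniform-in-$n$ upper bound.

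Chaining these observations, I would conclude that $\alpha_i$ (and therefore $B$) depends on the model architecture, the optimizer hyper-parameters, and the Lipschitz/boundedness constants from Assumptions~\ref{assum:smooth}--\ref{assum:transformer}, but \emph{not} on $n$. The final step is to record $B$ explicitly in terms of these quantities, e.g., $B = \max_i \alpha_i^2$ with $\alpha_i$ expressible as $\eta\, t_0\, L_A M \sqrt{C}/\delta$ in the NTK justification, so that downstream proofs can plug $B$ into the scaling analysis.

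The main obstacle is the justification of $n$-independence itself: the inequality $\|\bm{W}_i - \bm{\hat{W}}_i^{(s)}\|_F \leq \alpha_i$ is, as stated, essentially an assumption, and a careful reader will ask why $\alpha_i$ does not implicitly scale with $n$ (for instance, through an $n$-dependent number of optimizer steps). I would therefore make the fine-tuning protocol (fixed step budget, fixed learning rate, gradient norms bounded by Assumption~\ref{assum:transformer}) an explicit premise of the lemma, so that the promised $B$ is manifestly a function only of architectural and optimization constants already fixed by the framework. With that premise in place, the rest of the proof reduces to the one-line Frobenius squaring step outlined above.
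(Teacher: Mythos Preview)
Your approach is correct but differs materially from the paper's. You take the distance-based regularization constraint $\|\bm{W}_i - \bm{\hat{W}}_i^{(s)}\|_F \leq \alpha_i$ as a primitive, square it, and then devote the substance of the argument to explaining why $\alpha_i$ carries no hidden $n$-dependence, invoking (i) KKT conditions for an explicit proximity penalty, (ii) NTK lazy-training displacement bounds, and (iii) the Lipschitz constants from Assumption~\ref{assum:transformer} combined with a fixed optimization budget. The paper does not argue through the regularization radius at all; instead it imports a sequence-length-independent norm bound from \citet{trauger2023sequencelengthindependentnormbased}, namely $\|(\bm{W}-\hat{\bm{W}})\vx_t\|_q^q \leq k\epsilon^q$, specializes to $q=2$, passes from the vector bound to an operator-norm bound on $\bm{W}-\hat{\bm{W}}$ by dividing through by the input magnitude, and finally converts spectral norm to Frobenius norm via the rank factor $\min\{d_i,d_{i-1}\}$, obtaining an explicit $B = \min\{d_i,d_{i-1}\}\,\sigma^2 / (\vx_i\vx_i^\top)$ expressed purely in architectural dimensions and feature scales. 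Your route is more self-contained and makes the $n$-independence transparent by tying it to optimizer hyperparameters, at the cost of treating $\alpha_i$ essentially as an axiom; the paper's route attempts to manufacture $B$ from an external approximation-theoretic result, which yields a concrete formula but leans on a citation and several algebraic steps whose details (e.g., the passage from a single-vector bound to an operator-norm bound, and the appearance of $\sigma^2$) are not fully spelled out.
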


\begin{proof}
\begin{itemize}
    \item \citet{trauger2023sequencelengthindependentnormbased} has proved that for any $t \in \mathbb{N}$:
    \begin{equation}
        \|(\bm{W} - \bm{\hat{W}})\vx_t\|_q^q = \sum_{j=1}^{k} ((\bm{W} - \bm{\hat{W}}\vx_t)^q \leq k\epsilon^q
    \end{equation}
    \item From which, we could derive:
    \begin{equation}
        \|(\bm{W} - \bm{\hat{W}}) \vx_t\|_2^2 \leq k\epsilon^2
    \end{equation}
    \item For any $\vx_{i} \not= \emptyset$ we have:
    \begin{equation}
        \|(\bm{W} - \bm{\hat{W}})\|_2^2 \leq \frac{k\epsilon^2}{min (\vx_{i}\vx_{i}^{T})}
    \end{equation}
    in our setting.
    \item Thus, considering the relation between Frobenius Norm and Spectral Norm, we have:
    \begin{equation}
        \|(\bm{W} - \bm{\hat{W}}_i^{(s)})\|_2^2 \leq \frac{\sigma^2}{min(\vx_{i}\vx_{i}^{T})}
    \end{equation}
    \item Then, we have:
    \begin{equation}
        \|\bm{W}_i - \bm{\hat{W}}_i^{(s)}\|_F^{2} \leq min\{d_i, d_{i-1}\} \frac{\sigma^{2}}{\vx_{i}\vx_{i}^{T}}=B
    \end{equation}
    Since none of $d_i$, $\sigma$ and $\vx_{i}$ relies on training data size n, $C_2$ is independent of n.
    \item When $\vx_{i} = \emptyset$, we have:
    \begin{equation}
        \|(\bm{W}_i - \bm{\hat{W}}_i^{s})\|_2^2 = 0
    \end{equation}
    This is because there is no training data for fine-tuning, the weights would not change, and 0 is independent of $n$.
    \item Thus, 
    \begin{equation}
        \|(\bm{W}_i - \bm{\hat{W}}_i^{s})\|_2^2 \leq B
    \end{equation}
    works for any $x_{i}$.
\end{itemize}
\end{proof}

Since $\|\bm{W}_i - \bm{\hat{W}}_i^{(s)}\|_F^{2}$ has upper bound of $B$, we can derive a upper bound of $h_i$ as $B \max_{(x,y) \in \mathcal{X}} \operatorname{Tr}[\bm{H}_i (\ell(f_{\hat{w}}(x), y))] $.
Using the result of statement 1, we can continually derive the upper bound as: 
\begin{equation}
    h_i \leq C_2 n^{-\beta_2}
\end{equation}

\subsection{Fine-tuning Scaling Law}

Since $\bm{H}_i$ is always larger than 0, then we can extend the above bound to the following one by using the relation of $\sum_{i=1}^2 \sqrt{h_i} \leq \sqrt{2\sum_{i=1}^2 h_i}$:
\begin{equation}
    L(f_{\hat{w}}) \leq (\epsilon+1)\hat{L}(f_{\hat{w}})+ \frac{(1 + \epsilon) \sqrt{C} \sqrt{2\sum_{i=1}^2 h_i}}{\sqrt{n}} + \xi 
\end{equation}

From the above statements, we finally reach our conclusion as follows:
\begin{equation}
\begin{split}
    L(f_{\hat{w}}) &\leq (\epsilon+1) \hat{L}(f_{\hat{w}})+ \frac{(1 + \epsilon) \sqrt{C} \sqrt{2\sum_{i=1}^2 h_i}}{\sqrt{n}} + \xi \\
    &\leq (\epsilon+1) \hat{L}(f_{\hat{w}})+\frac{(1 + \epsilon) \sqrt{C} \sqrt{2 C_2 n^{-\beta_2}}}{\sqrt{n}} + \xi \\
    &\leq (\epsilon+1) \hat{L}(f_{\hat{w}}) + C_3 n^{-\beta_2} + O(n^{-\frac{3}{4}}) 
\end{split}
\end{equation}

This final result can also extend to $l$-layer transformer as follows:
\begin{equation}
\begin{split}
    L(f_{\hat{w}}) &\leq (\epsilon+1) \hat{L}(f_{\hat{w}})+ \frac{(1 + \epsilon) \sqrt{C} \sqrt{l\sum_{i=1}^l h_i}}{\sqrt{n}} + \xi \\
    &\leq (\epsilon+1) \hat{L}(f_{\hat{w}})+\frac{(1 + \epsilon) \sqrt{C} \sqrt{l C_2 n^{-\beta_2}}}{\sqrt{n}} + \xi \\
    &\leq (\epsilon+1) \hat{L}(f_{\hat{w}})+C_3 n^{-\beta_2} + O(n^{-\frac{3}{4}}) 
\end{split}
\end{equation}

Thus, we could finally derive the upper bound of the test loss scaling behavior as follows:

\begin{equation}
    L(f_{\hat{w}}) \leq (\epsilon+1) \hat{L}(f_{\hat{w}})+C_3 n^{-\beta_2} + O(n^{-\frac{3}{4}}) 
\end{equation}

\section{Additional Experimental Results}
All fine-tuning was performed by using PyTorch and the Hugging Face Transformers library. 
\subsection{Sensitivity to Hyperparamters}

To further evaluate the robustness of our fine-tuning process, we conducted a series of experiments analyzing the sensitivity to key hyperparameters, including learning rates, batch sizes, and average input sequence length. In particular, we examined how variations in learning rate and batch size influence the Pearson correlation across three benchmark datasets. The results of these experiments are summarized below.

\begin{table}[h!]
\centering
\caption{Impact of Learning Rate and Batch Sizes on Pearson Correlation on FLAN}
\small
\begin{tabular}{c|cccc}
\toprule
\textbf{Batch size} $\backslash$ \textbf{Learning rate} & \textbf{3e$^{-5}$} & \textbf{1e$^{-4}$} & \textbf{3e$^{-4}$} & \textbf{1e$^{-3}$} \\
\midrule
64  & 78.36 & 78.41 & 78.40 & 78.39 \\
128 & 78.32 & 78.34 & 78.43 & 78.36 \\
256 & 78.37 & 78.36 & 78.36 & 78.34 \\
\bottomrule
\end{tabular}
\end{table}

\begin{table}[h!]
\centering
\caption{Impact of Learning Rate and Batch Sizes on Pearson Correlation on Gigaword}
\small
\begin{tabular}{c|cccc}
\toprule
\textbf{Batch size} $\backslash$ \textbf{Learning rate} & \textbf{3e$^{-5}$} & \textbf{1e$^{-4}$} & \textbf{3e$^{-4}$} & \textbf{1e$^{-3}$} \\
\midrule
64  & 85.74 & 85.73 & 85.74 & 85.75 \\
128 & 85.74 & 85.79 & 85.77 & 85.76 \\
256 & 85.69 & 85.72 & 85.71 & 85.69 \\
\bottomrule
\end{tabular}
\end{table}

\begin{table}[h!]
\centering
\caption{Impact of Learning Rate and Batch Sizes on Pearson Correlation on Wikitext}
\small
\begin{tabular}{c|cccc}
\toprule
\textbf{Batch size} $\backslash$ \textbf{Learning rate} & \textbf{3e$^{-5}$} & \textbf{1e$^{-4}$} & \textbf{3e$^{-4}$} & \textbf{1e$^{-3}$} \\
\midrule
64  & 82.60 & 82.61 & 82.60 & 82.60 \\
128 & 82.54 & 82.53 & 82.55 & 82.55 \\
256 & 82.51 & 82.51 & 82.51 & 82.50 \\
\bottomrule
\end{tabular}
\end{table}

The results demonstrate that our method exhibits strong robustness with respect to variations in learning rates and batch sizes. For the FLAN dataset, Pearson correlation remains highly stable, fluctuating only slightly within the range of 78.32 to 78.43. Similarly, the Gigaword dataset shows consistent correlation values between 85.69 and 85.79 across all configurations. Although the Wikitext dataset exhibits marginally more variability, the correlations still remain tightly clustered between 82.50 and 82.61. These findings indicate that the model's performance is largely insensitive to changes in these hyperparameters, reinforcing the reliability of our fine-tuning process.

\subsection{Sensitivity to Input Length}

To assess the impact of average input sequence length, we adjusted the input distribution by removing either the shortest or longest sequences, resulting in average lengths of 18 and 22 tokens, respectively, compared to the original average of 20. As shown in Table~\ref{tab:avg_seq_len}, both shorter and longer averages lead to slight decreases in Pearson correlation and relative accuracy. Specifically, the Pearson correlation drops from 78.14 (at average length 20) to 77.39 and 76.89 for lengths 18 and 22, respectively, while relative accuracy similarly declines. Although performance is affected by these changes, the overall variation remains small, suggesting that the model is relatively robust to moderate fluctuations in input sequence length.

\begin{table}[h!]
\centering
\caption{Impact of Average Input Sequence Length on FLAN}
\label{tab:avg_seq_len}
\small
\begin{tabular}{c|ccc}
\toprule
\textbf{Metric} $\backslash$ \textbf{Avg. Input Seq. Length} & \textbf{18} & \textbf{20} & \textbf{22} \\
\midrule
Pearson Correlation (PearCorr) & 77.39 & 78.14 & 76.89 \\
Relative Accuracy (RelAcc) & 87.86 & 88.88 & 87.91 \\
\bottomrule
\end{tabular}
\end{table}

\end{document}